\def\eqref#1{equation~\ref{#1}}
\def\1{\bm{1}}
\def\ra{{\textnormal{a}}}
\def\vv{{\bm{v}}}
\def\vx{{\bm{x}}}
\DeclareMathAlphabet{\mathsfit}{\encodingdefault}{\sfdefault}{m}{sl}
\SetMathAlphabet{\mathsfit}{bold}{\encodingdefault}{\sfdefault}{bx}{n}
\definecolor{commentcolor}{RGB}{110,154,155}
\definecolor{inputcolor}{RGB}{255, 105, 180}   
\newcommand{\PyComment}[1]{\ttfamily\textcolor{commentcolor}{\# #1}}  
\newcommand{\PyCode}[1]{\ttfamily\textcolor{black}{#1}} 
\theoremstyle{plain}
\theoremstyle{definition}
\theoremstyle{remark}
\title{Cautious Optimizers: Improving Training with One Line of Code}
\author{
Kaizhao Liang\thanks{Equal contribution by KZ and LZ. $^{\dag}$Correspondence: \texttt{kaizhaol, lzchen@utexas.edu}}\hspace{0.15cm}$^{,1, \dag}$, Lizhang Chen$^{*,1,\dag}$, Bo Liu$^{1}$, ~\textbf{Qiang Liu}$^{1}$ \\
$^1$The University of Texas at Austin
}
\begin{document}

\maketitle

\begin{abstract}
AdamW has been the default optimizer for transformer pretraining. For many years, our community searched for faster and more stable optimizers with only constrained positive outcomes. In this work, we propose a \textbf{one-line modification in Pytorch} to any momentum-based optimizer, which we rename cautious optimizer, e.g. C-AdamW and C-Lion.  Our theoretical result shows that this modification preserves Adam's Hamiltonian function and it does not break the convergence guarantee under the Lyapunov analysis. In addition, a whole new family of optimizers is revealed by our theoretical insight. Among them, we pick the simplest one for empirical experiments, showing not only consistent speed-up on LLM pretraining, but also image classification, with minimum extra tuning on hyperparameters.

\end{abstract}

\section{Introduction}

\begin{algorithm}[hbtp!]
\begin{flushleft}
\PyComment{param p, update u from \textsc{Opt}, grad g} \\
\PyCode{\textcolor{magenta}{m = (u * g > 0).to(g.dtype)}} \\
\PyCode{p.add\_(u * \textcolor{magenta}{m/(m.mean()+eps)}, alpha=-lr)} \\
\end{flushleft}
\caption{Caution an Optimizer (\textsc{Opt}) in PyTorch}
\label{alg:generic-c-optim}
\vspace{-5pt}
\end{algorithm}

Optimization is an important and constantly evolving field in modern machine learning. Undoubtedly, Adam~\citep{kingma2014adam} and AdamW~\citep{loshchilov2017decoupled} are the most consequential optimizers proposed almost a decade ago. Since then, many efforts \citep{zhang2021extending, loshchilov2017fixing} have been made to discover better and faster optimizers beyond these two. However, until now, AdamW remains the dominant workhorse for applications, from pre-training Large Language Models (LLMs)~\citep{touvron2023LLaMA} to fine-tuning text to image diffusion~\citep{rombach2022high}, with no real challenges to their ruling status.

In the dawn of the era of LLMs, the arms race of model scaling intensifies~\citep{achiam2023gpt}. A faster optimizer means more training tokens can be consumed within the same amount of time. Ultimately, this leads to more capable models~\citep{kaplan2020scaling}. Hence, the interest in searching for an optimizer beyond AdamW is re-kindled. Recent progress in new AdamW alternatives such as Lion~\citep{chen2024symbolic,chen2023lion}, SHAMPOO~\citep{gupta2018shampoo}, SOAP~\citep{vyas2024soap}, ADOPT~\citep{taniguchi2024adopt}, and Schedule-Free~\citep{defazio2024road}, all claim substantial improvement over AdamW.

However, these methods normally require non-trivial efforts to obtain optimal results, especially hyperparameter tuning, which greatly limits their potential and wide adoption. In light of this dilemma, 
we propose \emph{cautious optimizers}, 
an exceptionally simple performance booster 
of any momentum-based optimizer that only requires one line of modification (see Algorithm~\ref{alg:generic-c-optim}).  
The change is simple: \emph{do not update unless the proposed update direction and the current gradients are aligned}. With this minor change, we obtain consistent improvement over the base optimizer without modification of the original optimal hyperparameters.

To provide an overview of the idea, let us consider a general optimizer for minimizing the loss $\loss(\bw)$:  
\bb 
\bw_{t+1} \gets \bw_t - \epsilon_t \bu_t, ~~~~~ 
\ee 
where $\bu_t$ is the negative update direction of the parameter $\bw_t$ at iteration $t$, and $\epsilon_t > 0$ is the step size. 
We will assume that the update represents a generic momentum-based optimizer, including, for example, Polyak and Nesterov momentum, Adam, and Lion.
In all these momentum-based optimizers, $\bu_t$ does not necessarily align with the gradient direction $\bg_t = \dd \loss(\bw_t)$, which can result in a temporary increase in the loss function and slow down convergence.

 Cautious optimizers avoid 
 this issue by adding a simple mask function based on the sign consistency of $\bu_t$ and $\bg_t$: 
 \bb 
 \bw_{t+1} \gets \boldsymbol{w}_t - \epsilon_t \bu_t \circ \phi(\bu_t \circ \bg_t), 
 \ee 
 where $\circ$ denotes an element-wise product, and $\phi$ is a map that reweights the update based on the product $\bu_t\circ \bg_t$. We simply take it as $$\phi(\vv x) = \alpha(\vv x)\ind(\vv x > 0),$$
 so that the update is zeroed out for coordinates on which the sign of $\bu_t$ and $\bg_t$ are inconsistent.
Here $\alpha(\vx )$  is a positive scaling factor, 
introduced to compensate the decrease of update magnitude due to masking. 
A simple choice of $\alpha$ is  
\bbb \label{equ:stepsizescale} 
\alpha(\vx ) 
=  \frac{\mathtt{dim}(\vv x)}{\mathtt{nnz}(\vv x >0) + \xi},  
\eee  
where $\mathtt{dim}(\cdot)$ and $\mathtt{nnz}(\cdot)$
represent the total number of elements and the number of non-zero elements of the input vector, respectively. Here, $\xi > 0$ is a positive constant, which we set to $\xi = 1$ by default. 
See Algorithms~\ref{alg:generic-c-optim} and \ref{alg:c-adamw} for more details. 

 This modification ensures
the new negative update to have a non-negative inner product with the gradient, and hence 
decreases the loss monotonically when the step size is sufficiently small.
 Specifically, Taylor approximation shows 
\bb 
\loss(\bw_{t+1}) - \loss(\bw_t) \approx 
 - \epsilon_t 
(\bu_t \circ \bg_t)\tt\phi(\bu_t \circ \bg_t) \leq 0.
\ee 
This ensures decrease of the loss, i.e., $\loss(\bw_{t+1}) \leq \loss(\bw_t) $, when the step size is sufficiently small. 
In comparison, typical momentum-based optimizers do not always guarantee a monotonic decrease in loss even with infinitesimal step sizes. The nature of momentum dynamics introduces oscillations due to inertia-like effects.

Our theoretical analysis shows that the modified algorithm converges to local optima under mild conditions on the base optimizers. 
An interesting aspect of this algorithm is that it \emph{does not} get stuck at non-stationary points of the loss, even if 
$\bu_t$ can be temporarily completely conflicting with $\bg_t$ and is therefore entirely masked out. 
This is because, in typical momentum methods, the update direction $\bu_t$ continues to accumulate the gradients and 
will eventually be updated to have a positive inner product with $\bg_t$ if it is stuck at a non-stationary point.

Our theoretical analysis encompasses both continuous-time and discrete-time cases. 
In the continuous-time setting, our theory shows that the modified algorithm guarantees convergence to local optima for optimizers that admit a Hamiltonian+Descent structure~\citep{chen2023lion, liang2024memory}, which broadly includes almost all existing popular algorithms, such as Adam, Lion, heavy ball, and Nesterov momentum.  
For these algorithms, we demonstrate that the cautious optimizers, with $\phi$ satisfying $x\phi(x) \geq \max(x, 0)$, retains the monotonic decreasing properties of the original Lyapunov (or Hamiltonian) functions of these algorithms while additionally minimizing the loss function. 
In the discrete-time setting, we analyze the behavior of various variants of mask functions and establish general conditions under which updates from cautious optimizers yield larger local descents compared to their base optimizers.

To summarize our contributions, we present the following:  
\begin{itemize}
    \item We propose Cautious Optimizers, 
    a simple performance booster for any momentum-based optimizer, implemented with just a single line of code.  
    \item We theoretically demonstrate that cautious optimizers preserve the convergence guarantees of the base optimizer while also accelerating the decrease of the loss function.  
    \item We show consistent improvements across various tasks, from pretraining LLM to image classification.
\end{itemize}


\begin{algorithm}[htbp!]
\caption{Cautious AdamW (C-AdamW)}
\begin{algorithmic}[1]
\REQUIRE parameter $\bw$, step sizes $\{\epsilon_t\}$, dampening factors $\beta_1, \beta_2 \in [0, 1)$, $e > 0$, weight decay $\gamma \geq 0$. 
\STATE Initialize $t = 0$, and $\bmm_0,  \bv_0$. 
\WHILE{$\bw_t$ not converged}
    \STATE $t \gets t + 1$
    
    \STATE $\bg_t \gets \nabla_{\bw} \loss_t(\bw_{t-1})$ 
    \STATE $\bmm_t \gets \beta_1 \bmm_{t-1} + (1 - \beta_1) \bg_t$
    \STATE $\bv_t \gets \beta_2 \bv_{t-1} + (1 - \beta_2) \bg_t^2$
    \STATE $\hat{\bmm}_t \gets \bmm_t / (1 - \beta_1^t)$
    \STATE $\hat{\bv}_t \gets \bv_t / (1 - \beta_2^t)$
    \STATE $\bu_t \gets \hat{\bmm}_t / (\sqrt{\hat{\bv}_t} + e)$
    \STATE \textcolor{magenta}{$ \boldsymbol\phi_t \gets \ind(\bu_t \circ \bg_t > 0)$} \hfill \bant{~Compute alignment mask}
    \STATE \textcolor{magenta}{$\bar \epsilon_t = \epsilon_t \frac{d}{\norm{\boldsymbol\phi_t}_0 + 1}$} \hfill \bant{~Scale lr, $d$ is dimension of $\boldsymbol\phi_t$}
    \STATE $\bw_t \gets \bw_{t-1} - \textcolor{magenta}{\bar \epsilon_t \boldsymbol\phi_t \circ}    \bu_t $ \bant{~Masked update}
    \STATE $\bw_t \gets \bw_t - \bar \epsilon_t \gamma \bw_{t}$ \bant{~Add weight decay}
\ENDWHILE
\end{algorithmic}

\label{alg:c-adamw}
\end{algorithm}

\section{Theory}

We start with introducing
a general Hamiltonian descent framework 
for the continuous-time forms of general momentum algorithms (Section~\ref{sec:background}). 
We then introduce the cautious optimizers in the continuous time form and discuss its theoretical properties (Section~\ref{sec:continuous}).  
Finally, we discuss in Section~\ref{sec:discrete}
theoretical properties of cautious optimizers in discrete time forms. 

\subsection{Hamiltonian+Descent }\label{sec:background}  
In the continuous-time form, most momentum-based algorithms can be viewed as variants of the damped Hamiltonian system,  
which admit a Lyapunov (or Hamiltonian) function that certifies their convergence towards the stationary points. 

The Lyapunov function is typically an augmented loss function $\mathcal{H}(\bw, \bs)$ defined over both the weights $\bw$ and an optimization state vector $\bs$ (which includes the momentum). It must satisfy $\min_{\bs} \mathcal{H}(\bw, \bs) = \loss(\bw),$ so that minimizing $\loss(\bw)$ is equivalent to minimizing $\mathcal{H}(\bw, \bs)$.  
This is typically achieved using a separable Hamiltonian of the form  
\bb 
\mathcal{H}(\bw, \bs) = \loss(\bw) + \kk(\bs),  
\ee  
where $\kk(\cdot)$ is any lower-bounded function.  
Consulting physical intuitions, we can think of $\mathcal{H}$ as the total energy of a system parameterized by $(\bw, \bs)$, with $\loss$ and $\kk$ representing the potential energy and kinetic energy, respectively.

The continuous-time form of common momentum-based algorithms can be unified into: 
\bbb
\begin{split}
&\ddt\bw_t = -\dd\kk(\bs_t) - \Phi_t(\dd\loss(\bw_t))  \\
&\ddt\bs_t = \dd\loss(\bw_t)  - \Psi_t(\dd\kk(\bs_t)), 
\end{split}
\label{equ:hd}
\eee 
where $\Phi_t(\cdot)$ and $\Psi_t(\cdot)$ are  \emph{monotonic} mappings satisfying 
\bb 
\norm{\bx}_{\Phi_t}^2 \defeq \langle\bx,  \Phi_t(\bx)\rangle\geq0, && 
\norm{\bx}_{\Psi_t}^2 \defeq \langle\bx,  \Psi_t(\bx)\rangle\geq0, &&
\ee 
for any $\bx$. 
With $\Phi(\bx) = \Psi(\bx) = 0$,  
the system in \text{\eqref{equ:hd}} reduces to the standard Hamiltonian system, which preserves $\mathcal{H}(\bw_t, \bs_t) = \text{const}$ along the trajectory.  
When the descending components $\Phi_t$ and $\Psi_t$ are added, the system ensures that $\mathcal{H}(\bw, \bs)$ becomes monotonically non-decreasing:  
\bbb 
\label{equ:rate0}
 &~~~~~~~~~~~~~~ \ddt \mathcal H(\bw_t, \bs_t) 
= -  \Delta_{\mathcal H}(\vv w_t, \vv s_t) \leq 0,  &  \\ 
 & \text{where }~~~~~
\Delta_{\mathcal H}(\vv w_t, \vv s_t) \defeq  
\norm{\dd \loss(\bw_t)}_{\Phi_t}^2 + \norm{\dd \kk(\bs_t)}_{\Psi_t}^2. &  \notag 
\eee  
On the other hand, $\loss(\bw)$, which is the true objective, is not necessarily decreasing monotonically. There can be cases where $\loss(\bw)$ increases temporarily in exchange for a large decrease in the kinetic energy $\mathcal{K}(\bs)$, while still ensuring a decrease in the total energy $\mathcal{H} = \loss + \kk$.  
Specifically, 
\bbb \label{equ:dlrate} 
& ~~~~~~~~\ddt \loss(\bw_t) = -\Delta_{\loss}(\bw_t, \bs_t),   \\ 
& \Delta_{\loss}(\bw_t, \bs_t) \defeq \dd \loss(\bw_t) \tt \dd \kk(\bs_t) + \norm{\dd \loss(\bw_t)}_{\Phi_t}^2. \notag 
\eee 
Here, $\Delta_{\loss}(\bw_t, \bs_t)$ may be  negative due to the cross term.

See Appendix for  the Hamiltonian of common optimizers including Adam~\citep{kingma2014adam} and Lion-$\mathcal K$~\citep{chen2023symbolic,chen2023lion}. 

\subsection{Cautious Dynamics}   \label{sec:continuous}
Our idea is to change the dynamics to make it \emph{simultaneously} decrease both $\mathcal H(\bw, \bs)$ and $\loss(\bw)$. 
We do this with a modified system: 
 \bbb
 \begin{split} 
&~\bbx_t  = \dd\loss(\bbw_t) \circ \dd \kk (\bbs_t)   \\ 
&\ddt \bbw_t  = -  \phi(\bbx_t) \circ \dd\kk(\bbs_t) - \Phi_t(\dd\loss(\bbw_t)) \\
&\ddt \bbs_t   = \dd\loss(\bbw_t)  - \Psi_t(\dd\kk(\bbs_t)),
\end{split} 
\label{equ:m_hd}
\eee
where $\circ$ denotes the element-wise product and $\phi$ is a vector to vector mapping. 
Here we weigh each element of the update direction $\dd \mathcal K(\bar \bs_t)$ based on the product of  $\dd\kk(\bs)$ with the gradient $\dd\loss(\bw)$. 
Note that we do not need to apply a mask on the $\Phi_t(\dd \loss(\bar{\bw}_t)$ term since it is always non-increasing by definition of $\Phi_t.$

The following conditions on the choice of function $\phi$ ensure that the system decreases \emph{both} $\mathcal H$ and $\loss$ simultaneously.

\begin{thm}\label{thm:decreasing}
Following the dynamics in \eqref{equ:m_hd} in $\RR^d$,  we have 
\bb 
 \ddt \mathcal H(\bbw_t, \bbs_t) 
& = 
(\bbx_t\tt (\mathbf{1} - \phi(\bbx_t))  - \Delta_{\mathcal H_t}(\bbw_t, \bbs_t), \\ 
\ddt \loss(\bbw_t) 
& = - \bbx_t \tt \phi(\bbx_t) - \norm{\dd \loss(\bbw_t)}_{\Phi_t}^2 \\ 
& = (\bbx_t\tt (\mathbf{1} - \phi(\bbx_t))  - \Delta_{\loss_t}(\bbw_t, \bbs_t),
\ee
Here, $ \Delta_{\mathcal H_t}(\bbw_t, \bbs_t)$ and $ \Delta_{\loss_t}(\bbw_t, \bbs_t)$, as defined in \eqref{equ:rate0} and \eqref{equ:dlrate}, respectively, represent the decreasing rates of $\mathcal H$ and $\loss$ in accordance with the original system \eqref{equ:hd}. 
Hence: 

 $\bullet$ If \( \bx\tt (\mathbf{1} -\phi(\bx)) \leq 0 \) for any $\bx\in\RR^d$,  
    then both $\mathcal H$ and $\loss$ decreases faster than the original system: 
    \bb 
    & \ddt  \mathcal H(\bbw_t, \bbs_t) \leq -  \Delta_{\mathcal H_t}(\bbw_t, \bbs_t) \leq0, \\ 
    & \ddt \loss(\bbw_t ) \leq - \Delta_{\loss_t}(\bbw_t, \bbs_t). 
    \ee     
$\bullet$ If \( \bx\tt \phi(\bx) \geq 0 \) for any $\bx\in\RR^d$, then $\loss$ decreases monotonically, 
    $\ddt \loss(\bbw_t) \leq 0.$
\end{thm}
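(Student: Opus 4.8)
The plan is to compute both time derivatives directly by the chain rule along the modified flow \eqref{equ:m_hd}, and then collapse the cross-terms using the element-wise definition of $\bbx_t$. First I would write $\ddt\mathcal H(\bbw_t,\bbs_t) = \dd\loss(\bbw_t)\tt\,\ddt\bbw_t + \dd\kk(\bbs_t)\tt\,\ddt\bbs_t$ and substitute the two velocity fields. This produces the two dissipation terms $\dd\loss(\bbw_t)\tt\Phi_t(\dd\loss(\bbw_t)) = \norm{\dd\loss(\bbw_t)}_{\Phi_t}^2$ and $\dd\kk(\bbs_t)\tt\Psi_t(\dd\kk(\bbs_t)) = \norm{\dd\kk(\bbs_t)}_{\Psi_t}^2$, which together form $\Delta_{\mathcal H_t}$, plus a conservative cross-term $\dd\kk(\bbs_t)\tt\dd\loss(\bbw_t)$ from the unmasked part of $\ddt\bbs_t$ and the masked cross-term $-\dd\loss(\bbw_t)\tt(\phi(\bbx_t)\circ\dd\kk(\bbs_t))$.

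The one step worth isolating is the coordinate-wise identity handling both cross-terms. Since $\bbx_t = \dd\loss(\bbw_t)\circ\dd\kk(\bbs_t)$, each coordinate obeys $[\dd\loss(\bbw_t)]_i[\dd\kk(\bbs_t)]_i = [\bbx_t]_i$. Hence the masked cross-term becomes $\dd\loss(\bbw_t)\tt(\phi(\bbx_t)\circ\dd\kk(\bbs_t)) = \sum_i[\bbx_t]_i[\phi(\bbx_t)]_i = \bbx_t\tt\phi(\bbx_t)$, where I use that $\phi$ acts coordinate-wise (guaranteed by $\phi(\vv x)=\alpha(\vv x)\ind(\vv x>0)$), while the conservative cross-term collapses to $\dd\kk(\bbs_t)\tt\dd\loss(\bbw_t) = \mathbf{1}\tt\bbx_t$. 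Substituting both gives $\ddt\mathcal H = \mathbf{1}\tt\bbx_t - \bbx_t\tt\phi(\bbx_t) - \Delta_{\mathcal H_t} = \bbx_t\tt(\mathbf{1} - \phi(\bbx_t)) - \Delta_{\mathcal H_t}$, the first claimed equality.

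For $\loss$ I would repeat the chain rule keeping only the $\ddt\bbw_t$ contribution, obtaining $\ddt\loss = -\bbx_t\tt\phi(\bbx_t) - \norm{\dd\loss(\bbw_t)}_{\Phi_t}^2$ after the same simplification; this is the first displayed form. Recalling from \eqref{equ:dlrate} that $\Delta_{\loss_t} = \mathbf{1}\tt\bbx_t + \norm{\dd\loss(\bbw_t)}_{\Phi_t}^2$ (its cross term $\dd\loss(\bbw_t)\tt\dd\kk(\bbs_t)$ again equaling $\mathbf{1}\tt\bbx_t$), I would add and subtract $\mathbf{1}\tt\bbx_t$ to rewrite the rate as $\bbx_t\tt(\mathbf{1} - \phi(\bbx_t)) - \Delta_{\loss_t}$, the second form. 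The two bullets then follow mechanically: if $\bx\tt(\mathbf{1} - \phi(\bx))\le 0$ the leading term in both rates is nonpositive, so $\ddt\mathcal H\le -\Delta_{\mathcal H_t}\le 0$ (using $\Delta_{\mathcal H_t}\ge 0$ from the positivity of $\Phi_t,\Psi_t$) and $\ddt\loss\le -\Delta_{\loss_t}$; and if $\bx\tt\phi(\bx)\ge 0$ the first form $\ddt\loss = -\bbx_t\tt\phi(\bbx_t) - \norm{\dd\loss(\bbw_t)}_{\Phi_t}^2$ is a sum of two nonpositive terms, so $\ddt\loss\le 0$. I expect no genuine obstacle beyond this bookkeeping; the only point needing care is that $\phi$ is applied element-wise, which is exactly what makes the product identity $[\dd\loss(\bbw_t)]_i[\dd\kk(\bbs_t)]_i[\phi(\bbx_t)]_i = [\bbx_t]_i[\phi(\bbx_t)]_i$ valid.
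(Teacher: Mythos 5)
Your proposal is correct and follows essentially the same route as the paper's proof: chain rule on $\mathcal H$ and $\loss$ along \eqref{equ:m_hd}, collapse of the cross-terms via the identity $\dd\loss(\bbw_t)\tt(\phi(\bbx_t)\circ\dd\kk(\bbs_t)) = \bbx_t\tt\phi(\bbx_t)$, and then reading off the two bullets from the signs of the leading terms. The only cosmetic difference is that you track the conservative cross-term $\mathbf{1}\tt\bbx_t$ separately before recombining, whereas the paper folds it directly into $\dd\loss(\bbw_t)\tt(\dd\kk(\bbs_t)\circ(\mathbf{1}-\phi(\bbx_t)))$; also note the product identity $a\tt(b\circ c)=(a\circ c)\tt b$ holds for any $\phi$, so the element-wise caveat you flag is not actually needed for that step.
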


One sufficient condition for $\phi$ to satisfy both conditions in Theorem~\ref{thm:decreasing} 
is to enforce the following element-wise constraint: 
\bbb\label{equ:conditon}
 \phi(\vv x)_i \geq 1   ~\text{if $x_i > 0$},  &&\text{and}&& 
 \phi(\vv x)_i \leq 0  ~\text{if $x_i < 0$},
\eee
where $\phi(\vv x)_i$ denotes the $i$-th element of $\phi(\vv x)$. 
Under this condition, both $\mathcal{H}$ and $\loss$ decrease monotonically following the cautious dynamics, 
at a rate faster than the original systems.  
In particular, the default choice 
$
\phi(\vv x) = \alpha(\vv x)\ind(\vv x \geq 0)$ with 
$\alpha(\vv x) \geq 1$ satisfies the conditions above. 
\paragraph{Convergence to Stationary Points} 
In addition to monotonically decreasing the loss function, 
we want to ensure that the algorithm does not get stuck 
unless the solution reaches a stationary point, which in practice is typically a local optimum, of the loss function. The following result demonstrates that this property holds under the same conditions as the original Hamiltonian descent system.


\begin{cor}\label{cor:lasa}
Assume that the norm $\|\cdot\|_\Psi^2$ is positive definite, $\Psi(0) = 0$, and that $\mathcal H(\bw, \bs) = \mathcal{L}(\bw) + \mathcal K(\bs)$ is differentiable. Then, the bounded solutions of the original system \eqref{equ:hd} converge to a stationary point of $\mathcal H(\bw, \bs)$. Similarly, the bounded solutions of \eqref{equ:m_hd} also converge to a stationary point of $\mathcal H(\bw, \bs)$.
\end{cor}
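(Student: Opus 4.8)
The plan is to prove both claims with a single application of LaSalle's invariance principle, using $\mathcal H(\bw,\bs)=\loss(\bw)+\kk(\bs)$ as the Lyapunov function. I specialize to the autonomous case $\Phi_t=\Phi$, $\Psi_t=\Psi$, consistent with the $t$-free notation in the statement; the genuinely time-dependent case would instead require the machinery of asymptotically autonomous systems, which I would invoke only if needed. First I would record that $\mathcal H$ is bounded below (since $\loss$ is bounded below and $\kk$ is lower-bounded) and, by \eqref{equ:rate0}, non-increasing along \eqref{equ:hd}, with $\ddt\mathcal H=-\Delta_{\mathcal H}\le0$. For a bounded trajectory this makes the $\omega$-limit set nonempty, compact, and invariant, and LaSalle guarantees it is contained in the largest invariant set $M$ inside the zero-rate set $E=\{(\bw,\bs):\Delta_{\mathcal H}(\bw,\bs)=0\}$.

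The core computation is then to characterize $E$ and $M$. Since $\Delta_{\mathcal H}=\norm{\dd\loss(\bw)}_\Phi^2+\norm{\dd\kk(\bs)}_\Psi^2$ and $\|\cdot\|_\Psi^2$ is positive definite, every point of $E$ satisfies $\dd\kk(\bs)=0$. On the invariant set $M$ this identity persists in time, so along any trajectory in $M$ we have $\dd\kk(\bs_t)\equiv0$; combined with $\Psi(0)=0$ the state equation collapses to $\ddt\bs_t=\dd\loss(\bw_t)$. Differentiating the persistent constraint gives $\nabla^2\kk(\bs_t)\,\dd\loss(\bw_t)=0$, from which I would extract $\dd\loss(\bw_t)=0$ (see the obstacle below). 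Together with $\dd\kk(\bs_t)=0$ this yields $\nabla\mathcal H=0$ on $M$, i.e., every limit point is a stationary point of $\mathcal H$, which proves the claim for \eqref{equ:hd}.

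For the cautious system \eqref{equ:m_hd} the same skeleton works once I supply two facts. First, Theorem~\ref{thm:decreasing} (under the sign condition $\bbx\tt(\mathbf{1}-\phi(\bbx))\le0$, e.g. the choice \eqref{equ:conditon}) gives $\ddt\mathcal H(\bbw_t,\bbs_t)=\bbx_t\tt(\mathbf{1}-\phi(\bbx_t))-\Delta_{\mathcal H}\le-\Delta_{\mathcal H}\le0$, so $\mathcal H$ is again a Lyapunov function and LaSalle applies verbatim; moreover on the zero-rate set the nonpositive cross term and $\Delta_{\mathcal H}$ must vanish separately, so $\dd\kk(\bs)=0$ there as well. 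Second, and this is the key observation coupling the two systems: on the set $\{\dd\kk(\bs)=0\}$ we have $\bbx=\dd\loss(\bw)\circ\dd\kk(\bs)=0$, so the masked term $\phi(\bbx)\circ\dd\kk(\bs)=\phi(0)\circ 0=0$ vanishes and \eqref{equ:m_hd} reduces exactly to \eqref{equ:hd}. Hence the largest invariant set of the cautious dynamics inside its zero-rate set admits the identical characterization, and the same conclusion follows.

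The step I expect to be the main obstacle is showing that maintaining $\dd\kk(\bs_t)\equiv0$ along a trajectory in $M$ forces $\dd\loss(\bw_t)=0$, i.e., that $M$ contains only genuine stationary points rather than nontrivial orbits drifting inside the level set $\{\dd\kk=0\}$. The clean route is to differentiate the constraint to obtain $\nabla^2\kk(\bs_t)\,\dd\loss(\bw_t)=0$ and then invoke nondegeneracy of $\kk$ on this set (e.g. strict convexity, so that $\nabla^2\kk\succ0$ and hence $\dd\loss(\bw_t)=0$). I would therefore flag that some regularity of $\kk$ beyond mere differentiability is what collapses the invariant set to stationary points, and I would verify that this holds for the concrete kinetic energies $\kk$ of Adam, Lion, and heavy-ball momentum catalogued in the appendix, so that the corollary covers the optimizers of practical interest.
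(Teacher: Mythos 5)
Your proposal follows the same route as the paper's proof: apply LaSalle's invariance principle with $\mathcal H$ as the Lyapunov function, use positive definiteness of $\|\cdot\|_{\Psi}^2$ together with $\Psi(0)=0$ to conclude $\dd\kk(\bs^*)=0$ on the zero-dissipation set, and then rule out non-stationary $\bw^*$. The one place where you genuinely diverge is the final step. The paper argues directly that if $\dd\loss(\bw^*)\neq 0$ then $\dot{\bs}^*=\dd\loss(\bw^*)\neq 0$, so $(\bw^*,\bs^*)$ cannot be a positive limit point; you instead propagate the constraint $\dd\kk(\bs_t)\equiv 0$ along an orbit of the invariant set, differentiate it, and ask for nondegeneracy of $\nabla^2\kk$ to force $\dd\loss(\bw_t)=0$. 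Your concern here is legitimate and in fact applies to the paper's own argument: a positive limit point need not be an equilibrium, and without some nondegeneracy of $\kk$ an orbit could in principle drift inside the level set $\{\dd\kk=0\}$ with $\dd\loss\neq 0$, so the paper's closing step is carried out at the same level of informality that you explicitly flag. Your additional observations are also worthwhile: LaSalle as stated requires an autonomous (or asymptotically autonomous) system, which the paper passes over by keeping the subscript $t$ on $\Phi_t,\Psi_t$; and your remark that on $\{\dd\kk(\bs)=0\}$ the masked term $\phi(\bbx)\circ\dd\kk(\bbs)$ vanishes identically, so the cautious system coincides with the original one on the relevant invariant set, is a cleaner justification of the cautious case than the paper's ``the proof follows the same reasoning.'' In short, same skeleton, but your version makes explicit the regularity hypotheses (nondegeneracy of $\kk$, autonomy) that the paper implicitly assumes.
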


\subsection{Discrete-Time Analysis}\label{sec:discrete}
We analyze the discrete time case, demonstrating that each step of cautious optimizers are at least as good as the step of the original optimizers under mild conditions. 

We will consider a generic update of form 
\bbb\label{equ:disc_0}
\begin{split}
& \bw_{k+1} =  \bw_k  - \epsilon_k \bu_k(\bw_k, \bs_k), \\ 
& \bs_{k+1} = \bs_k + \bv_k(\bw_k, \bs_k), 
\end{split} 
\eee
where $\bu_k, \bv_k$ are vector fields that define the updates, and $\epsilon_k$ is the step size. 
We write the cautious variants as 
\bbb\label{equ:disc_1} 
\begin{split} 
& \bbw_{k+1} = \bbw_k - \epsilon_k  \bar{\bu}_k \circ \bar{\vv\phi}_k, ~~~~
 \bar{\bu}_k = \bu_k (\bar{\bw}_k, \bar{\bs}_k)\\
&  \bbs_{k+1} = \bbs_k + \bv_k(\bbw_k, \bbs_k), 
\end{split}
\eee 
where $ \bar{\vv\phi}_k $ is a mask vector determined by the algorithm,  
$
 \bar{\vv \phi}_k = \alpha(\bbu_k \circ \bar{\vv{g}}_k) \ind(\bbu_k \circ \bar{\vv{g}}_k > 0), 
$
where $\bar{\vv g}_k = \dd \loss(\bbw_k)$. 
The following is a comparison result showing that each step of the cautious optimizer 
yields larger loss decrease than the original optimizer under mild conditions. 

\begin{thm}
\label{thm:large_loss_decreasing}
Consider \eqref{equ:disc_0} and \eqref{equ:disc_1} with 
    a  $\mu$-smooth loss function $\mathcal{L}(\cdot)$.
    Assume the element-wise 
    operator $\phi$ satisfies    
    $$\Delta(\vv x) \defeq   - \vv x \tt (1- \phi(\vv x)) \geq 0.
    $$
    Starting from $(\bw_{t},\bs_t) = (\bbw_{t},\bbs_t)$, we have 
\bb
\loss(\bbw_{t+1}) \leq \loss( \bw_{t+1}), 
\ee 
which holds for step size $\epsilon_t\leq \frac{2\Delta (\bar{\vv u}_t\circ \bar {\vv g_t})}{\mu\norm{\bar{\vv r}_t} (2\cdot\norm{\bar{\bu}_t} + \norm{\bar{\vv r}_t})}$, where $\bar{\vv r}_t = \bar{\bu}_t\circ (\boldsymbol{1}-\bar{\vv\phi}_t)$ and $\bar{\vv g}_t = \dd \loss(\bar{\bw }_t).$

\end{thm}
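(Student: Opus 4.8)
The plan is to apply the $\mu$-smoothness inequality exactly once, anchored at the cautious iterate $\bbw_{t+1}$, and then reduce the claim to a single scalar step-size condition. First I would exploit that both trajectories share the starting point: writing $\bw \defeq \bw_t = \bbw_t$, the common raw update is $\bar{\bu}_t = \bu_t(\bw,\bs_t)$, so $\bw_{t+1} = \bw - \epsilon_t \bar{\bu}_t$ and $\bbw_{t+1} = \bw - \epsilon_t\,\bar{\bu}_t\circ\bar{\vv\phi}_t$, whose difference is exactly $\bw_{t+1}-\bbw_{t+1} = -\epsilon_t\,\bar{\bu}_t\circ(\boldsymbol 1 - \bar{\vv\phi}_t) = -\epsilon_t\bar{\vv r}_t$. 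Using the two-sided descent bound from $\mu$-smoothness with base point $\bbw_{t+1}$ then yields
\[
\loss(\bw_{t+1}) - \loss(\bbw_{t+1}) \geq -\epsilon_t\,\dd\loss(\bbw_{t+1})\tt\bar{\vv r}_t - \tfrac{\mu}{2}\epsilon_t^2\norm{\bar{\vv r}_t}^2,
\]
so after dividing by $\epsilon_t>0$ it suffices to establish $-\dd\loss(\bbw_{t+1})\tt\bar{\vv r}_t \geq \tfrac{\mu}{2}\epsilon_t\norm{\bar{\vv r}_t}^2$.

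Next I would split off a sign-definite main term by decomposing $\dd\loss(\bbw_{t+1}) = \bar{\vv g}_t + (\dd\loss(\bbw_{t+1}) - \bar{\vv g}_t)$ with $\bar{\vv g}_t = \dd\loss(\bw)$. The element-wise structure is the crux here: coordinate $i$ contributes $\bar g_{t,i}\bar r_{t,i} = (\bar u_{t,i}\bar g_{t,i})(1 - \bar\phi_{t,i}) = x_i(1-\phi(\vv x)_i)$ with $\vv x = \bar{\bu}_t\circ\bar{\vv g}_t$, so summing gives the identity $-\bar{\vv g}_t\tt\bar{\vv r}_t = -\vv x\tt(\boldsymbol 1 - \phi(\vv x)) = \Delta(\bar{\bu}_t\circ\bar{\vv g}_t)\geq 0$ by hypothesis. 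This is precisely where I use that $\phi$ acts coordinatewise and that $\bar{\vv\phi}_t = \phi(\bar{\bu}_t\circ\bar{\vv g}_t)$.

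For the leftover perturbation term I would invoke Lipschitzness of the gradient: by Cauchy–Schwarz and $\mu$-smoothness,
\[
-(\dd\loss(\bbw_{t+1}) - \bar{\vv g}_t)\tt\bar{\vv r}_t \geq -\mu\norm{\bbw_{t+1}-\bw}\,\norm{\bar{\vv r}_t} \geq -\mu\epsilon_t\norm{\bar{\bu}_t}\norm{\bar{\vv r}_t},
\]
where the final inequality bounds $\norm{\bbw_{t+1}-\bw} = \epsilon_t\norm{\bar{\bu}_t\circ\bar{\vv\phi}_t}$ by $\epsilon_t\norm{\bar{\bu}_t}$, valid since masking only zeros out coordinates. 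Combining the two pieces gives $-\dd\loss(\bbw_{t+1})\tt\bar{\vv r}_t \geq \Delta(\bar{\bu}_t\circ\bar{\vv g}_t) - \mu\epsilon_t\norm{\bar{\bu}_t}\norm{\bar{\vv r}_t}$, and demanding that this dominate $\tfrac{\mu}{2}\epsilon_t\norm{\bar{\vv r}_t}^2$ rearranges into $\Delta(\bar{\bu}_t\circ\bar{\vv g}_t) \geq \tfrac{\mu}{2}\epsilon_t\norm{\bar{\vv r}_t}(2\norm{\bar{\bu}_t} + \norm{\bar{\vv r}_t})$, i.e. exactly the stated step-size threshold.

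The step needing the most care is the choice of anchor. Anchoring the smoothness bound at $\bbw_{t+1}$, not at the shared start $\bw$, is what reproduces the precise form $\norm{\bar{\vv r}_t}(2\norm{\bar{\bu}_t}+\norm{\bar{\vv r}_t})$; anchoring at $\bw$ and applying the two one-sided bounds separately instead produces a weaker $\norm{\bar{\bu}_t\circ\bar{\vv\phi}_t}^2 + \norm{\bar{\bu}_t}^2$ factor. A secondary subtlety I would flag is that the stated bound uses $\norm{\bar{\bu}_t}$ rather than $\norm{\bar{\bu}_t\circ\bar{\vv\phi}_t}$: this is a harmless relaxation for the binary mask (where masking cannot increase the norm), but would need $\alpha\leq 1$ or an extra scaling factor if $\alpha$ magnifies the retained coordinates.
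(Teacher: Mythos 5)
Your proof is correct in outline and follows essentially the same strategy as the paper's: compute $\bw_{t+1}-\bbw_{t+1} = -\epsilon_t\bar{\vv r}_t$, apply $\mu$-smoothness once between the two candidate iterates, split the gradient at the expansion point into $\bar{\vv g}_t$ plus a Lipschitz remainder, and identify $-\bar{\vv g}_t\tt\bar{\vv r}_t = \Delta(\bar{\bu}_t\circ\bar{\vv g}_t)\geq 0$ via the element-wise structure. The one substantive difference is the anchor. The paper expands around the \emph{uncautious} iterate, writing $\loss(\bbw_{t+1}) - \loss(\bw_{t+1}) \leq \epsilon_t\,\dd\loss(\bw_{t+1})\tt\bar{\vv r}_t + \tfrac{\mu}{2}\epsilon_t^2\norm{\bar{\vv r}_t}^2$, so its remainder is controlled by $\norm{\dd\loss(\bw_{t+1})-\dd\loss(\bw_t)} \leq \mu\norm{\bw_{t+1}-\bw_t} = \mu\epsilon_t\norm{\bar{\bu}_t}$ with no further relaxation. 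Your anchoring at $\bbw_{t+1}$ instead produces the displacement $\norm{\bbw_{t+1}-\bw_t} = \epsilon_t\norm{\bar{\bu}_t\circ\bar{\vv\phi}_t}$, and the step you flag at the end, bounding this by $\epsilon_t\norm{\bar{\bu}_t}$, genuinely fails for the paper's default mask, where $\alpha(\cdot)\geq 1$ magnifies the surviving coordinates so the masked update can have larger norm than $\bar{\bu}_t$. Your closing claim is therefore backwards: it is anchoring at $\bw_{t+1}$, not at $\bbw_{t+1}$, that reproduces the stated factor $\norm{\bar{\vv r}_t}(2\norm{\bar{\bu}_t}+\norm{\bar{\vv r}_t})$ cleanly and without any restriction on $\alpha$. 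With that single switch of expansion point your argument coincides with the paper's proof.
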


This result works only for a range of step sizes due to the need of Taylor approximation. 
In Appendix~\ref{sec:innerproductmasks}, 
we show a case when the comparison holds for all step sizes when using a different mask function based on inner products and when $\mathcal L$ is convex. 
The following is another result showing that when imposing more restrictive conditions on the mask to the step size that ensure that the cautious optimizer is guaranteed to decrease loss at each step.

\begin{thm}
  Consider updates \eqref{equ:disc_0} and \eqref{equ:disc_1}. 
   Assuming $\mathcal{L}(\cdot)$ is $\mu$-smooth, and consider the following mask function: 
   $$\bar{\vv\phi}_k = \alpha_k\ind\left (\dd \mathcal{L}(\bbw_k)\circ \bar{\vv u}_{k} \geq \frac{\mu \sigma}{2}\bar{\vv u}_{k} \circ\bar{\vv u}_{k} \right ), 
   $$ 
   where  $\{\alpha_k\}$ is any sequence and $\sigma \geq \epsilon_k \alpha_k.$    
       Starting from $(\bw_{t},\bs_t) = (\bbw_{t},\bbs_t)$, we have 
\bb
\loss(\bbw_{k+1}) \leq \loss ( \bbw_{k}). 
\ee 
\end{thm}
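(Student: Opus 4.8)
The plan is to reduce the claim entirely to the descent lemma for $\mu$-smooth functions and then exploit the fact that both $\bar{\vv u}_k$ and the mask act coordinate-wise, so that the threshold hidden inside the indicator is engineered precisely to cancel the quadratic smoothness penalty term by term. First I would record the one-step displacement $\bbw_{k+1} - \bbw_k = -\epsilon_k\, \bar{\vv u}_k \circ \bar{\vv\phi}_k$ and apply $\mu$-smoothness in the form
\[
\loss(\bbw_{k+1}) \leq \loss(\bbw_k) + \langle \bar{\vv g}_k,\, \bbw_{k+1}-\bbw_k\rangle + \frac{\mu}{2}\norm{\bbw_{k+1}-\bbw_k}^2,
\]
where $\bar{\vv g}_k = \dd\loss(\bbw_k)$. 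Substituting the displacement turns the bound into
\[
\loss(\bbw_{k+1}) - \loss(\bbw_k) \leq -\epsilon_k\, \langle \bar{\vv g}_k,\, \bar{\vv u}_k \circ \bar{\vv\phi}_k\rangle + \frac{\mu}{2}\epsilon_k^2\,\norm{\bar{\vv u}_k \circ \bar{\vv\phi}_k}^2.
\]

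Second, I would use that $\bar{\vv\phi}_k$ equals $\alpha_k$ on the active set $S = \{i : \bar g_{k,i}\,\bar u_{k,i} \geq \tfrac{\mu\sigma}{2}\bar u_{k,i}^2\}$ and $0$ elsewhere, so the right-hand side collapses into a sum over $S$ only:
\[
\loss(\bbw_{k+1}) - \loss(\bbw_k) \leq \sum_{i\in S}\Big(-\epsilon_k\alpha_k\, \bar g_{k,i}\bar u_{k,i} + \frac{\mu}{2}\epsilon_k^2\alpha_k^2\, \bar u_{k,i}^2\Big).
\]
The decisive step is to bound each summand using the defining inequality of $S$. For $i\in S$ we have $\bar g_{k,i}\bar u_{k,i} \geq \tfrac{\mu\sigma}{2}\bar u_{k,i}^2$, and combining this with $\sigma \geq \epsilon_k\alpha_k$ (and $\epsilon_k,\alpha_k\geq 0$) gives
\[
-\epsilon_k\alpha_k\,\bar g_{k,i}\bar u_{k,i} + \frac{\mu}{2}\epsilon_k^2\alpha_k^2\,\bar u_{k,i}^2 \leq \frac{\mu}{2}\epsilon_k\alpha_k(\epsilon_k\alpha_k - \sigma)\,\bar u_{k,i}^2 \leq 0.
\]
Summing over $i\in S$ yields $\loss(\bbw_{k+1}) \leq \loss(\bbw_k)$, which is the claim.

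I expect there to be no deep obstacle here; the difficulty is purely one of calibrating constants. The threshold $\tfrac{\mu\sigma}{2}$ placed inside the indicator is chosen exactly so that, after smoothness contributes the per-coordinate penalty $\tfrac{\mu}{2}\epsilon_k\alpha_k\,\bar u_{k,i}^2$, the condition $\sigma\geq\epsilon_k\alpha_k$ forces the linear gain to dominate. Because $\bar{\vv u}_k$ and the mask both operate elementwise, there is no cross-coordinate coupling, so the per-coordinate sign analysis suffices and no Cauchy--Schwarz or norm manipulation is needed. The only bookkeeping point worth stating explicitly is the nonnegativity of $\epsilon_k$ and $\alpha_k$, which is what preserves the direction of the inequality when the factor $\epsilon_k\alpha_k$ is pulled out in front of $\bar g_{k,i}\bar u_{k,i}$; without it the threshold-based cancellation could flip sign.
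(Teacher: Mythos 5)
Your proposal is correct and follows essentially the same route as the paper's proof: apply the descent lemma, substitute the masked update, and use the indicator's threshold together with $\sigma \geq \epsilon_k\alpha_k$ to dominate the quadratic smoothness term. The only cosmetic difference is that you carry out the cancellation coordinate-by-coordinate over the active set, whereas the paper factors out $\epsilon_k\alpha_k$ and argues in vector/norm form; your explicit note that $\epsilon_k,\alpha_k \geq 0$ is needed is a fair point the paper leaves implicit.
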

    
The results above demonstrate that cautious optimizers reduce the loss more efficiently than the original optimizers at each single step.
A natural question is whether  these comparison results can be extended to multiple steps. This, however, becomes challenging because, after the first step, the two optimizers explore different regions of the loss landscape, making it easy to construct counterexamples where one method outperforms the other.
This is consistent with  the 
\emph{no free lunch} theorems, which state that no single optimizer can dominate another across all possible loss functions \citep{wolpert1997no}.
Nevertheless, it is reasonable to expect that the advantage observed in a single step would naturally extend to multiple steps for the practical loss functions encountered in deep learning, as evidenced by the experiments presented.

\section{Experiments}

In this section, we evaluate the performance of cautious optimizers compared to their standard counterparts, highlighting the benefits introduced by the cautious masking mechanism. We begin with a 2D toy experiment to provide a visual demonstration of how cautious masking improves optimization. Subsequently, we extend the evaluation to large-scale pretraining tasks for both language and vision models, comparing the performance of standard optimizers and their cautious variants.

\subsection{2D Optimization Toy}
\label{sec::exp-toy}

\begin{figure*}[t!]
    \centering
    \includegraphics[width=\linewidth]{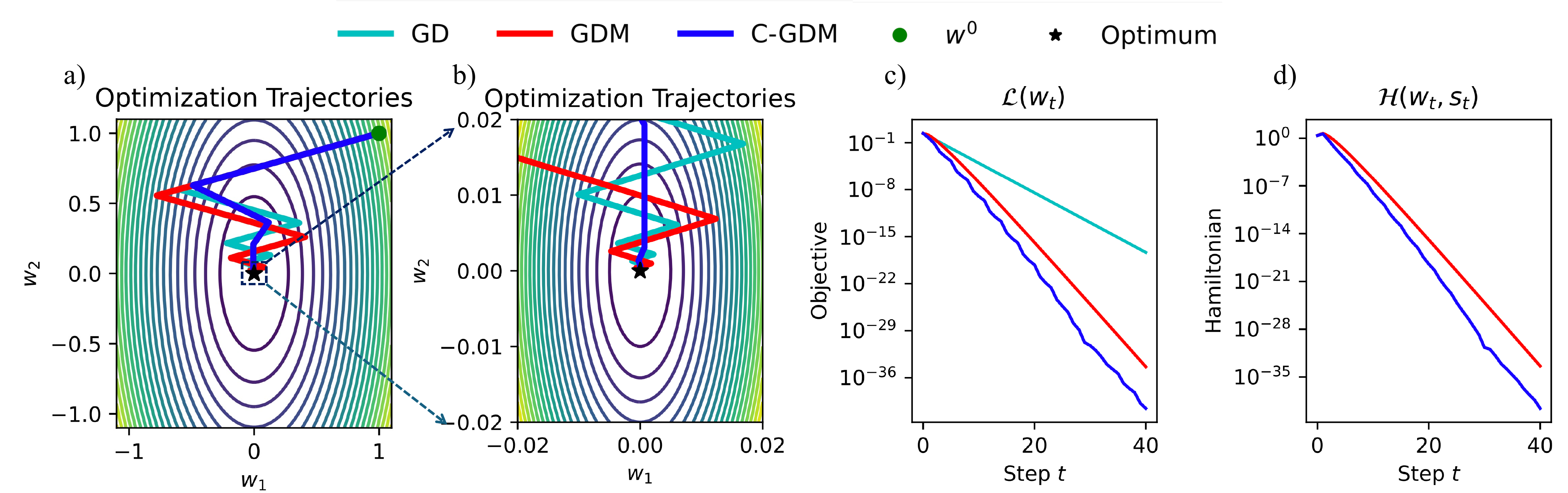}
    \caption{
  We compare gradient descent with Polyak momentum (GDM) and its element-wise cautious variant (C-GDM), using gradient descent (GD) as a baseline. The step size for GD and the hyperparameters of GDM (including step size and momentum coefficients) are chosen to achieve the optimal convergence rates, which can be analytically derived (see, e.g.,~\cite{goh2017why}). For cautious optimizers, step sizes $\epsilon$ and momentum coefficients $\beta$ are empirically tuned, as shown in Figure~\ref{fig:toy1}. Detailed experimental settings are described in Section~\ref{sec::exp-toy}. In \textbf{Plot (a)}, we visualize the optimization trajectories of the three methods, starting from the initial point \((1, 1)\) with zero-initialized momentum. Notably, C-GDM converges to the optimum with significantly reduced overshooting and oscillation, \textbf{Plot (b)} zooms in on the trajectories from \textbf{Plot (a)}, focusing on a smaller region \((0.02 \times 0.02)\) for enhanced clarity. Furthermore, \textbf{Plots (c)} and \textbf{(d)} show that C-GDM consistently and monotonically decreases both the objective and the Hamiltonian associated with the original GDM, highlighting its superior performance in minimizing these metrics compared to GDM.
}
    \vspace{-15pt}
    \label{fig:toy_lz}
\end{figure*}

\begin{figure*}[t!]
    \centering
    \includegraphics[width=\linewidth]{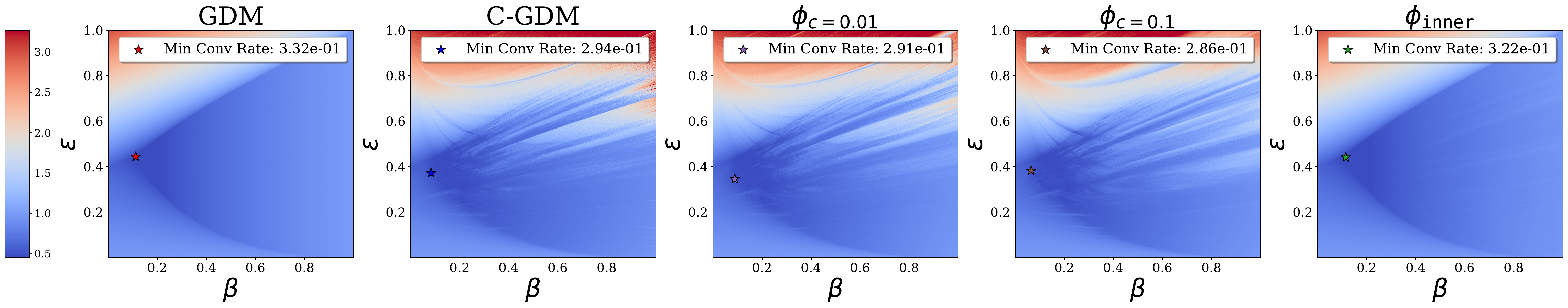}
    \caption{
    Convergence rate heatmaps for the objective function with condition number \(\kappa = 4\) (same setup as in~\ref{fig:toy_lz}). The heatmaps illustrate convergence rates, where values greater than 1 indicate divergence for the corresponding \((\epsilon, \beta)\) configuration. Smaller convergence rates correspond to faster convergence. From left to right, we show heatmaps for $\phi_c$ with $c = 0.01, 0.1$ and $\phi_{\mathtt{inner}} $ as shown in ~\eqref{equ:phi_ablation}. Note that \(\phi_0\) represents a constant function, reducing to gradient descent with momentum (GDM), whose optimal convergence rate is given in closed form~\citep{goh2017why}. From the heatmaps, it is clear that all cautious momentum variants$\phi_c$ with $c = 0.01, 0.1$ and $\phi_{\mathtt{inner}} $ demonstrate superior convergence rates compared to GDM. }
    \label{fig:toy1}
\end{figure*}

We consider a 2D optimization problem with objective  $\loss(\bm{w}) = \kappa (w_1)^2 + (w_2)^2$, where $\bm{w} = (w_1, w_2) \in \mathbb{R}^2$ is the parameter. Obviously, the optimum is at $\bm{w}^* = (0, 0)$. We set $\kappa = 4$ in our experiments. 
We apply gradient descent (GD), gradient descent with Polyak momentum (GDM), and cautious gradient descent with momentum (C-GDM) on this toy example, starting from $\bm{w}_0 = (1, 1)$. Specifically, for GDM, we adopt the conventional momentum update:
\begin{align*}
\bm{s}_t \leftarrow \beta \bm{s}_{t-1} + \nabla \loss(\bm{w}_t), && 
\bm{w}_t \leftarrow \bm{w}_{t-1} - \epsilon \bm{s}_t,
\end{align*}
where $\beta \in [0, 1)$  and $\epsilon$ is the learning rate.
\paragraph{Observation:}
On the right of Figure~\ref{fig:toy_lz}, we compare GDM and C-GDM with the same hyperparameters $(\epsilon, \beta)$.
We ablate over different combinations of $(\beta, \epsilon)$ $\in$ $\{(0.01, 0.5)$, $(0.01, 0.9)$, $(0.01, 0.99)$, $(0.01, 0.999)$, $(0.1, 0.99)$, $(0.001, 0.99)\}$. Across all settings, C-GDM outperforms GDM, confirming the importance of cautious masking. Given the same $(\epsilon,\beta)$, cautious is always not worse than momentum, and often it gives significant improvement, especially when the choice $(\epsilon, \beta)$ is suboptimal for momentum, meaning that cautious masking makes it  more robust.
From the left of Figure~\ref{fig:toy_lz}, one can see that GDM, due to the momentum, has fluctuating $\loss(\bw_t)$, while C-GDM ensures that $\loss(\bw_t)$ monotonically decreases. In addition, C-GDM achieves a faster drop in terms of GDM's Hamiltonian. 

In Figure~\ref{fig:toy_lz}, we compare GDM and C-GDM, each using their optimal $(\epsilon, \beta)$. For GDM, the optimal values are derived theoretically (e.g.,~\citep{goh2017why}), while for C-GDM, they are obtained through a grid search. Despite using the optimal configuration, GDM exhibits significant overshooting, oscillations, and slower loss convergence. In contrast, C-GDM achieves smoother trajectories, reduced overshooting, and faster convergence, demonstrating its superior stability and efficiency. 

We estimate an algorithm's convergence rate as the slope of $\log \loss(\bw_t)$ over time via linear regression and plot heatmaps over \((\epsilon, \beta)\)-space, where \(\epsilon\) and \(\beta\) are the learning rate and momentum. Figure~\ref{fig:toy1} shows that cautious methods achieve lower optimal convergence rates compared to the momentum method ($\frac{\sqrt{\kappa} - 1}{\sqrt{\kappa} + 1}$, red dot). The heatmaps highlight that all cautious momentum variants outperform GDM in convergence rates.

\subsection{Pretraining Large Language Models (LLMs)} 

We begin by investigating the language modeling task using a 100M LLaMA~\citep{touvron2023LLaMA} model as the foundational architecture. The models are trained on the C4 (Colossal Clean Crawled Corpus) dataset~\citep{raffel2020exploring}, a large-scale web-crawled text corpus containing billions of tokens. We provide results from the following settings: we take a 100M model and train it with batch size up to 2 million tokens for 50 billion tokens ($25\times$ Chinchilla Optimal). For optimization, we employ AdamW~\citep{loshchilov2017decoupled} and Lion~\citep{chen_symbolic_2023}, two popular optimizers in modern language modeling, as baselines. These are compared with their cautious counterparts, which we term Cautious AdamW (C-AdamW) and Cautious Lion (C-Lion).

\begin{table}[h!]
\centering

\begin{tabular}{lcccccccc}
\toprule
lr & 1e-4 & 3e-4 & 1e-3 & 3e-3 & 1e-2 & 2e-2 & 3e-2 & 1e-1 \\
\midrule
AdamW   & 85.050 & 24.384 & 19.249 & 19.007 & \textbf{18.965} & 19.609 & ** & ** \\
C-AdamW & -- & -- & 19.065 & 18.771 & \textbf{18.684} & 18.821 & -- & -- \\

\bottomrule
\end{tabular}
\hspace{5pt}
\begin{tabular}{lcccccc}
lr & 3e-5 & 1e-4 & 3e-4 & 6e-4 & 1e-3 & 1e-2 \\
\midrule
Lion   & **  & 28.250  & \textbf{21.401}& 21.937 & ** & **  \\
C-Lion & -- & 21.354 & \textbf{19.795} & 20.403 & 20.977	& ** \\
\bottomrule
\end{tabular}
\caption{We report final evaluation perplex, the lower the better. "**" are runs that did not converge due to either too large or too small learning rates. "--" stands for runs we skip due to lack of baseline comparison. Each model is trained with batch size up to \textbf{2 million} tokens for \textbf{50 billion} tokens in total (25 $\times$ Chinchilla Optimal\citep{hoffmann2022training}). We use $\beta_1 = 0.9$, $\beta_2 = 0.95$ and weight decay $0.1$ on AdamW; Sequence lengths of all models are 1024. For Lion experiments, we follow the recommendation from the \citep{chen2023lion} and use $\beta_1 = 0.95$ and $\beta_2 = 0.98$. For scheduler, we use CosineAnnealing with warmup and the learning rate is decayed to 10\% of the initial learning rate. Gradient accumulation is set to 8 to increase the global batch size.}
\label{tab:c4_100m}
\end{table}


\textbf{Observation: } As shown in Table ~\ref{tab:c4_100m}, Cautious Optimizers demonstrate consistent improvements in both evaluation perplexity and sample efficiency.  Table~\ref{tab:c4_100m} shows the cautious modification is robust across learning rates and Cautious doesn't change the optimality of hyperparameter search done on the base optimizers. Surprisingly, Cautious can also tolerate a higher learning rate in the Lion experiments and achieve stable training even when the baseline diverges.

To further confirm our finding, we also include a scaling experiment with C-AdamW on FineWeb-Edu \citep{penedo2024fineweb}, a more recent and higher quality web-scale text dataset. After rigorous and thorough hyperparameter search, we found that C-AdamW is consistently outperforming baseline AdamW. In table \ref{tab:llm_scaling}, we report optimal results for each scale.

\begin{table}[htbp!]
\centering
\caption{Perplexity comparison between AdamW and C-AdamW across different model scales at $1\times$ Chinchilla \citep{hoffmann2022training}, hyperparameters are extensively searched with coordinate descent over a discrete grid. Details can be found in appendix  \ref{sec::llm-scaling-details}}
\begin{tabular}{lccc}
\toprule
\textbf{Scale} & \textbf{AdamW} & \textbf{C-AdamW} & \textbf{Improvement (\%)} \\
\midrule
130M  & 27.39 & \textbf{27.30} & 0.33 \\
300M  & 18.30 & \textbf{18.28} & 0.10 \\
520M  & 15.07 & \textbf{14.92} & 1.00 \\
1.2B  & 11.36 & \textbf{11.32} & 0.32 \\
\bottomrule
\end{tabular}
\label{tab:llm_scaling}
\end{table}

Furthermore, we perform downstream evaluation on the produced 1.2B checkpoints with $1\times$ Chinchilla ($20 \times$ tpp) on 7 downstream tasks, where the checkpoint trained by cautious optimizer wins in 5 of them (MMLU, OpenBookQA, Arc Easy as well as HellaSwag and Arc Challenge).

\begin{table*}[ht]
\centering
\small
\begin{tabular}{llcc}
\toprule
\textbf{Task / Group} & \textbf{Metric} & \textbf{AdamW} & \textbf{C-AdamW} \\
\midrule
Arc Easy  \citep{allenai:arc}    & acc       & 0.6082 $\pm$ 0.0100 & \textbf{0.6090 $\pm$ 0.0100} \\
Arc Challenge \citep{allenai:arc} & acc\_norm       & 0.2875 $\pm$ 0.0132 & \textbf{0.2978 $\pm$ 0.0134} \\
Hellaswag \citep{zellers2019hellaswag}     & acc\_norm       & 0.4169 $\pm$ 0.0049 & \textbf{0.4193 $\pm$ 0.0049} \\
Lambada OpenAI \citep{radford2019language}& acc       & \textbf{0.3311 $\pm$ 0.0066} & 0.3229 $\pm$ 0.0065 \\
OpenBookQA \citep{OpenBookQA2018}     & acc       & 0.2340 $\pm$ 0.0190 & \textbf{0.2360 $\pm$ 0.0190} \\
PIQA \citep{bisk2020piqa}          & acc\_norm       & \textbf{0.6774 $\pm$ 0.0109} & 0.6768 $\pm$ 0.0109 \\
\midrule
MMLU \citep{hendryckstest2021}& acc       & 0.2529 $\pm$ 0.0037 & \textbf{0.2535 $\pm$ 0.0037} \\
\bottomrule
\end{tabular}
\caption{Comparison of benchmark results between C-AdamW and AdamW across multiple tasks and MMLU groups. All evaluations are done with LM Eval-Harness\citep{eval-harness}. Bold indicates the better score.}
\label{tab:adamw_comparison}
\end{table*}

\subsection{Image Classification}

 We also include a classic classification task on Mini-ImageNet on ViT \citep{dosovitskiy2020image} with two additional optimizers MARS \citep{yuan2024mars} and LaProp \citep{ziyin2020laprop}, both are momentum-based optimizers and their cautious variants perform consistently better as shown in the table \ref{tab:mini_imagenet}.
\begin{table}[htbp!]
\centering
\begin{tabular}{lc}
\toprule
Method & Eval\_Top1 \\
\midrule
C-AdamW  & \textbf{73.52} \\
AdamW   & 72.11 \\
C-LaProp & \textbf{73.92} \\
LaProp \citep{ziyin2020laprop}  & 71.73 \\
C-MARS   & \textbf{74.91} \\
MARS \citep{yuan2024mars}    & 74.06 \\
\bottomrule
\end{tabular}
\caption{Top-1 evaluation accuracy on Mini-ImageNet, the higher the better. We can see that the cautious variant is better across base optimizer options. Hyperparameters can be found in appendix \ref{sec::mini-imagenet-details}}
\label{tab:mini_imagenet}
\end{table}
\section{Related Work}
We provide a brief overview of existing efforts on designing Adam-like optimizers, and the related works on Hamiltonian dynamics.
\paragraph{Adam and Its Variants} 
A plethora of Adam variants have been developed to address different aspects of optimization challenges~\citep{kingma2014adam, loshchilov2019decoupled}. AdamW~\citep{loshchilov2019decoupled} introduced a key improvement by decoupling weight decay from optimization steps, restoring the original formulation of weight decay regularization. NAdam~\citep{dozat2016incorporating} incorporated Nesterov updates into Adam, while AdaBelief~\citep{zhuang2020adabelief} refined the second momentum $v_t$ to track the EMA of $(g_t - m_t)^2$, improving generalization. Adan~\citep{xie2024adan} added an extra momentum term for better training performance, albeit at the cost of additional memory usage. More recently, ADOPT~\citep{taniguchi2024adopt} innovated by folding normalized updates into first-order momentum updates. From an efficiency perspective, several approaches target memory cost reduction. AdaFactor~\citep{shazeer2018adafactor} factorizes second-order statistics into a row-column outer product, enabling sub-linear memory usage. K-Fac~\citep{Martens2015OptimizingNN} approximates the Fisher information matrix with a Kronecker-factored representation, supporting sublinear natural gradient updates. Techniques like fused gradient computation~\citep{Lv2023FullPF} further minimize memory costs.\citep{wang2024cadam} is a concurrent work that focus on continuous learning setting. In contrast to these approaches, our proposed C-AdamW introduces a single-line modification to the widely used AdamW optimizer. This modification not only retains the simplicity and efficiency of AdamW but also eliminates the need for hyperparameter tuning, as the default parameters of AdamW suffice. Furthermore, while many of the aforementioned methods focus on optimizing or extending specific aspects of the Adam algorithm, C-AdamW is more general—it seamlessly integrates with all momentum-based optimizers, offering a general solution with minimal implementation effort.

\paragraph{Hamiltonian Dynamics}
Hamiltonian dynamics, rooted in classical mechanics, provides a powerful mathematical framework for analyzing the motion of systems in continuous spaces. This perspective has gained traction in optimization, where the introduction of Hamiltonian principles sheds light on momentum-based algorithms~\citep{10.5555/3042817.3043064, nesterov1983method, nguyen2024h, anonymous2024improving}. Unlike Gradient Descent (GD), which ensures a monotonic decrease in the objective function, momentum-based methods often follow non-monotonic trajectories, posing unique analytical challenges~\citep{jin_accelerated_2017}. To address this, researchers have developed multiple Lyapunov functions for convex settings~\citep{krichene_accelerated_2015, wilson_lyapunov_2018}, providing a structured approach to analyze convergence. \citep{10.5555/3042817.3043064} offered a physical interpretation of momentum in optimization, linking it to the dynamics described by Hamiltonian mechanics, and demonstrated how these principles underpin classical methods like those of Nesterov and Polyak~\citep{nesterov1983method}. Furthermore, Hamiltonian dynamics have been instrumental in deriving convergence rates for accelerated methods~\citep{jin_accelerated_2017} and, more recently, for advanced optimizers like Lion~\citep{chen2023lion} and its distributed variant~\citep{liu2024communication}. In a related vein, \citep{maddison2018hamiltonian} explored optimization methods from the perspective of continuous-time ODEs, emphasizing their Hamiltonian structure. Mirror Descent, a related framework, has been shown to maintain efficiency estimates with a mild dependence on the dimensionality of decision variables, making it particularly suitable for large-scale optimization problems~\citep{tzen2023variational, krichene_accelerated_2015}. These advancements highlight the versatility and depth of Hamiltonian formalism in bridging the gap between optimization theory and practical algorithm design.

\section{Conclusion}
In summary, we introduce Cautious Optimizers, an enhancement for momentum-based optimizers that can be implemented with a single line of code. Our theoretical analysis demonstrates that Cautious Optimizers not only preserve the convergence guarantees of the base optimizers but also accelerate the reduction of the loss function. Empirically, it delivers faster LLM pretraining and better accuracy on image classification. Finally, we suggest a few promising future directions: (1) Apply cautious optimizers to more settings such as reinforcement learning and continuous learning; (2) masking in the eigenspace rather than the parameter space; (3) rigorous analyses of how cautious optimizers strictly improve the convergence rate (empirically shown in \ref{fig:toy_lz}).


\section{Acknowledgement}
This work was supported by the Amazon AI PhD Fellowship, and in part by the Institute for Foundations of Machine Learning (IFML) and the Office of Naval Research (ONR) under Grant No. N00014-25-1-2354.

\bibliography{reference}

@article{loshchilov2017decoupled,
  title={Decoupled weight decay regularization},
  author={Loshchilov, I},
  journal={arXiv preprint arXiv:1711.05101},
  year={2017}
}

@article{kingma2014adam,
  title={Adam: A method for stochastic optimization},
  author={Kingma, Diederik P},
  journal={arXiv preprint arXiv:1412.6980},
  year={2014}
}

@article{maddison2018hamiltonian,
  title={Hamiltonian descent methods},
  author={Maddison, Chris J and Paulin, Daniel and Teh, Yee Whye and O'Donoghue, Brendan and Doucet, Arnaud},
  journal={arXiv preprint arXiv:1809.05042},
  year={2018}
}

@article{zhang2021extending,
  title={Extending AdamW by leveraging its second moment and magnitude},
  author={Zhang, Guoqiang and Kenta, Niwa and Kleijn, W Bastiaan},
  journal={arXiv preprint arXiv:2112.06125},
  year={2021}
}

@article{loshchilov2017fixing,
  title={Fixing weight decay regularization in adam},
  author={Loshchilov, Ilya and Hutter, Frank and others},
  journal={arXiv preprint arXiv:1711.05101},
  volume={5},
  year={2017}
}

@article{touvron2023llama,
  title={Llama: Open and efficient foundation language models},
  author={Touvron, Hugo and Lavril, Thibaut and Izacard, Gautier and Martinet, Xavier and Lachaux, Marie-Anne and Lacroix, Timoth{\'e}e and Rozi{\`e}re, Baptiste and Goyal, Naman and Hambro, Eric and Azhar, Faisal and others},
  journal={arXiv preprint arXiv:2302.13971},
  year={2023}
}

@inproceedings{rombach2022high,
  title={High-resolution image synthesis with latent diffusion models},
  author={Rombach, Robin and Blattmann, Andreas and Lorenz, Dominik and Esser, Patrick and Ommer, Bj{\"o}rn},
  booktitle={Proceedings of the IEEE/CVF conference on computer vision and pattern recognition},
  pages={10684--10695},
  year={2022}
}

@inproceedings{gupta2018shampoo,
  title={Shampoo: Preconditioned stochastic tensor optimization},
  author={Gupta, Vineet and Koren, Tomer and Singer, Yoram},
  booktitle={International Conference on Machine Learning},
  pages={1842--1850},
  year={2018},
  organization={PMLR}
}

@article{vyas2024soap,
  title={Soap: Improving and stabilizing shampoo using adam},
  author={Vyas, Nikhil and Morwani, Depen and Zhao, Rosie and Shapira, Itai and Brandfonbrener, David and Janson, Lucas and Kakade, Sham},
  journal={arXiv preprint arXiv:2409.11321},
  year={2024}
}

@article{defazio2024road,
  title={The road less scheduled},
  author={Defazio, Aaron and Yang, Xingyu Alice and Mehta, Harsh and Mishchenko, Konstantin and Khaled, Ahmed and Cutkosky, Ashok},
  journal={arXiv preprint arXiv:2405.15682},
  year={2024}
}

@article{taniguchi2024adopt,
  title={ADOPT: Modified Adam Can Converge with Any $\backslash beta\_2$ with the Optimal Rate},
  author={Taniguchi, Shohei and Harada, Keno and Minegishi, Gouki and Oshima, Yuta and Jeong, Seong Cheol and Nagahara, Go and Iiyama, Tomoshi and Suzuki, Masahiro and Iwasawa, Yusuke and Matsuo, Yutaka},
  journal={arXiv preprint arXiv:2411.02853},
  year={2024}
}

@article{chen2024symbolic,
  title={Symbolic discovery of optimization algorithms},
  author={Chen, Xiangning and Liang, Chen and Huang, Da and Real, Esteban and Wang, Kaiyuan and Pham, Hieu and Dong, Xuanyi and Luong, Thang and Hsieh, Cho-Jui and Lu, Yifeng and others},
  journal={Advances in neural information processing systems},
  volume={36},
  year={2024}
}

@article{achiam2023gpt,
  title={Gpt-4 technical report},
  author={Achiam, Josh and Adler, Steven and Agarwal, Sandhini and Ahmad, Lama and Akkaya, Ilge and Aleman, Florencia Leoni and Almeida, Diogo and Altenschmidt, Janko and Altman, Sam and Anadkat, Shyamal and others},
  journal={arXiv preprint arXiv:2303.08774},
  year={2023}
}

@article{kaplan2020scaling,
  title={Scaling laws for neural language models},
  author={Kaplan, Jared and McCandlish, Sam and Henighan, Tom and Brown, Tom B and Chess, Benjamin and Child, Rewon and Gray, Scott and Radford, Alec and Wu, Jeffrey and Amodei, Dario},
  journal={arXiv preprint arXiv:2001.08361},
  year={2020}
}

@article{dozat2016incorporating,
  title={Incorporating nesterov momentum into adam},
  author={Dozat, Timothy},
  year={2016}
}

@article{liu2024communication,
  title={Communication Efficient Distributed Training with Distributed Lion},
  author={Liu, Bo and Wu, Lemeng and Chen, Lizhang and Liang, Kaizhao and Zhu, Jiaxu and Liang, Chen and Krishnamoorthi, Raghuraman and Liu, Qiang},
  journal={arXiv preprint arXiv:2404.00438},
  year={2024}
}

@article{chen2023lion,
  title={Lion secretly solves constrained optimization: As lyapunov predicts},
  author={Chen, Lizhang and Liu, Bo and Liang, Kaizhao and Liu, Qiang},
  journal={arXiv preprint arXiv:2310.05898},
  year={2023}
}

@article{liang2024memory,
  title={Memory-efficient llm training with online subspace descent},
  author={Liang, Kaizhao and Liu, Bo and Chen, Lizhang and Liu, Qiang},
  journal={arXiv preprint arXiv:2408.12857},
  year={2024}
}

@article{raffel2020exploring,
  title={Exploring the limits of transfer learning with a unified text-to-text transformer},
  author={Raffel, Colin and Shazeer, Noam and Roberts, Adam and Lee, Katherine and Narang, Sharan and Matena, Michael and Zhou, Yanqi and Li, Wei and Liu, Peter J},
  journal={Journal of machine learning research},
  volume={21},
  number={140},
  pages={1--67},
  year={2020}
}

@article{russakovsky2015imagenet,
  title={Imagenet large scale visual recognition challenge},
  author={Russakovsky, Olga and Deng, Jia and Su, Hao and Krause, Jonathan and Satheesh, Sanjeev and Ma, Sean and Huang, Zhiheng and Karpathy, Andrej and Khosla, Aditya and Bernstein, Michael and others},
  journal={International journal of computer vision},
  volume={115},
  pages={211--252},
  year={2015},
  publisher={Springer}
}

@inproceedings{he2022masked,
  title={Masked autoencoders are scalable vision learners},
  author={He, Kaiming and Chen, Xinlei and Xie, Saining and Li, Yanghao and Doll{\'a}r, Piotr and Girshick, Ross},
  booktitle={Proceedings of the IEEE/CVF conference on computer vision and pattern recognition},
  pages={16000--16009},
  year={2022}
}

@article{xie2024adan,
  title={Adan: Adaptive nesterov momentum algorithm for faster optimizing deep models},
  author={Xie, Xingyu and Zhou, Pan and Li, Huan and Lin, Zhouchen and Yan, Shuicheng},
  journal={IEEE Transactions on Pattern Analysis and Machine Intelligence},
  year={2024},
  publisher={IEEE}
}

@article{chen2023symbolic,
  title={Symbolic discovery of optimization algorithms},
  author={Chen, Xiangning and Liang, Chen and Huang, Da and Real, Esteban and Wang, Kaiyuan and Liu, Yao and Pham, Hieu and Dong, Xuanyi and Luong, Thang and Hsieh, Cho-Jui and others},
  journal={arXiv preprint arXiv:2302.06675},
  year={2023}
}

@article{loshchilov2019decoupled,
  title={Decoupled weight decay regularization},
  author={Loshchilov, Ilya and Hutter, Frank},
  journal={arXiv preprint arXiv:1711.05101},
  year={2017}
}

@inproceedings{10.5555/3042817.3043064,
  title={On the importance of initialization and momentum in deep learning},
  author={Sutskever, Ilya and Martens, James and Dahl, George and Hinton, Geoffrey},
  booktitle={International conference on machine learning},
  pages={1139--1147},
  year={2013},
  organization={PMLR}
}

@article{chen_symbolic_2023,
  title={Symbolic discovery of optimization algorithms},
  author={Chen, Xiangning and Liang, Chen and Huang, Da and Real, Esteban and Wang, Kaiyuan and Liu, Yao and Pham, Hieu and Dong, Xuanyi and Luong, Thang and Hsieh, Cho-Jui and others},
  journal={arXiv preprint arXiv:2302.06675},
  year={2023}
}

@article{krichene_accelerated_2015,
  title={Accelerated mirror descent in continuous and discrete time},
  author={Krichene, Walid and Bayen, Alexandre and Bartlett, Peter L},
  journal={Advances in neural information processing systems},
  volume={28},
  year={2015}
}

@article{tzen2023variational,
  title={Variational Principles for Mirror Descent and Mirror Langevin Dynamics},
  author={Tzen, Belinda and Raj, Anant and Raginsky, Maxim and Bach, Francis},
  journal={IEEE Control Systems Letters},
  year={2023},
  publisher={IEEE}
}

@article{wilson_lyapunov_2018,
  title={A lyapunov analysis of momentum methods in optimization},
  author={Wilson, Ashia C and Recht, Benjamin and Jordan, Michael I},
  journal={arXiv preprint arXiv:1611.02635},
  year={2016}
}

@inproceedings{jin_accelerated_2017,
  title={Accelerated gradient descent escapes saddle points faster than gradient descent},
  author={Jin, Chi and Netrapalli, Praneeth and Jordan, Michael I},
  booktitle={Conference On Learning Theory},
  pages={1042--1085},
  year={2018},
  organization={PMLR}
}

@inproceedings{nesterov1983method,
  title={A method for solving the convex programming problem with convergence rate O (1/$\kappa$\^{} 2)},
  author={Nesterov, Yurii Evgen’evich},
  booktitle={Dokl. akad. nauk Sssr},
  volume={269},
  pages={543--547},
  year={1983}
}

@article{goh2017why,
  author = {Goh, Gabriel},
  title = {Why Momentum Really Works},
  journal = {Distill},
  year = {2017},
  url = {http://distill.pub/2017/momentum},
  doi = {10.23915/distill.00006}
}

@article{nguyen2024h,
  title={H-Fac: Memory-Efficient Optimization with Factorized Hamiltonian Descent},
  author={Nguyen, Son and Chen, Lizhang and Liu, Bo and Liu, Qiang},
  journal={arXiv preprint arXiv:2406.09958},
  year={2024}
}

@inproceedings{anonymous2024improving,
title={Improving Adaptive Moment Optimization via Preconditioner Diagonalization},
author={Anonymous},
booktitle={Submitted to The Thirteenth International Conference on Learning Representations},
year={2024},
url={https://openreview.net/forum?id=NdNuKMEv9y},
note={under review}
}

@article{zhuang2020adabelief,
  title={Adabelief optimizer: Adapting stepsizes by the belief in observed gradients},
  author={Zhuang, Juntang and Tang, Tommy and Ding, Yifan and Tatikonda, Sekhar C and Dvornek, Nicha and Papademetris, Xenophon and Duncan, James},
  journal={Advances in neural information processing systems},
  volume={33},
  pages={18795--18806},
  year={2020}
}

@inproceedings{Martens2015OptimizingNN,
  title={Optimizing Neural Networks with Kronecker-factored Approximate Curvature},
  author={James Martens and Roger Baker Grosse},
  booktitle={International Conference on Machine Learning},
  year={2015},
  url={https://api.semanticscholar.org/CorpusID:11480464}
}

@inproceedings{Lv2023FullPF,
  title={Full Parameter Fine-tuning for Large Language Models with Limited Resources},
  author={Kai Lv and Yuqing Yang and Tengxiao Liu and Qi-jie Gao and Qipeng Guo and Xipeng Qiu},
  booktitle={Annual Meeting of the Association for Computational Linguistics},
  year={2023},
  url={https://api.semanticscholar.org/CorpusID:259187846}
}

@article{dosovitskiy2020image,
  title={An image is worth 16x16 words: Transformers for image recognition at scale},
  author={Dosovitskiy, Alexey},
  journal={arXiv preprint arXiv:2010.11929},
  year={2020}
}

@inproceedings{shazeer2018adafactor,
  title={Adafactor: Adaptive learning rates with sublinear memory cost},
  author={Shazeer, Noam and Stern, Mitchell},
  booktitle={International Conference on Machine Learning},
  pages={4596--4604},
  year={2018},
  organization={PMLR}
}

@article{wolpert1997no,
  title={No free lunch theorems for optimization},
  author={Wolpert, David H and Macready, William G},
  journal={IEEE transactions on evolutionary computation},
  volume={1},
  number={1},
  pages={67--82},
  year={1997},
  publisher={IEEE}
}

@article{yang2024qwen2,
  title={Qwen2. 5 Technical Report},
  author={Yang, An and Yang, Baosong and Zhang, Beichen and Hui, Binyuan and Zheng, Bo and Yu, Bowen and Li, Chengyuan and Liu, Dayiheng and Huang, Fei and Wei, Haoran and others},
  journal={arXiv preprint arXiv:2412.15115},
  year={2024}
}

@misc{qwq-32b-preview,
    title = {QwQ: Reflect Deeply on the Boundaries of the Unknown},
    url = {https://qwenlm.github.io/blog/qwq-32b-preview/},
    author = {Qwen},
    month = {November},
    year = {2024}
}

@model{poweriter,
  author = {PowerInfer},
  title = {PowerInfer/QWQ-LONGCOT-500K},
  year = {2024},
  publisher = {Hugging Face},
  version = {1.0}
}

@article{huang2024n+,
  title={The N+ Implementation Details of RLHF with PPO: A Case Study on TL; DR Summarization},
  author={Huang, Shengyi and Noukhovitch, Michael and Hosseini, Arian and Rasul, Kashif and Wang, Weixun and Tunstall, Lewis},
  journal={arXiv preprint arXiv:2403.17031},
  year={2024}
}

@article{schulman2017proximal,
  title={Proximal policy optimization algorithms},
  author={Schulman, John and Wolski, Filip and Dhariwal, Prafulla and Radford, Alec and Klimov, Oleg},
  journal={arXiv preprint arXiv:1707.06347},
  year={2017}
}

@article{yuan2024mars,
  title={Mars: Unleashing the power of variance reduction for training large models},
  author={Yuan, Huizhuo and Liu, Yifeng and Wu, Shuang and Zhou, Xun and Gu, Quanquan},
  journal={arXiv preprint arXiv:2411.10438},
  year={2024}
}

@article{ziyin2020laprop,
  title={LaProp: Separating momentum and adaptivity in Adam},
  author={Ziyin, Liu and Wang, Zhikang T and Ueda, Masahito},
  journal={arXiv preprint arXiv:2002.04839},
  year={2020}
}

@article{bergsma2025straight,
  title={Straight to Zero: Why Linearly Decaying the Learning Rate to Zero Works Best for LLMs},
  author={Bergsma, Shane and Dey, Nolan and Gosal, Gurpreet and Gray, Gavia and Soboleva, Daria and Hestness, Joel},
  journal={arXiv preprint arXiv:2502.15938},
  year={2025}
}

@misc{jordan2024muon,
  author       = {Keller Jordan and Yuchen Jin and Vlado Boza and Jiacheng You and
                  Franz Cesista and Laker Newhouse and Jeremy Bernstein},
  title        = {Muon: An optimizer for hidden layers in neural networks},
  year         = {2024},
  url          = {https://kellerjordan.github.io/posts/muon/}
}

@article{wang2024cadam,
  title={CAdam: Confidence-Based Optimization for Online Learning},
  author={Wang, Shaowen and Liu, Anan and Xiao, Jian and Liu, Huan and Yang, Yuekui and Xu, Cong and Pu, Qianqian and Zheng, Suncong and Zhang, Wei and Li, Jian},
  journal={arXiv preprint arXiv:2411.19647},
  year={2024}
}

@article{hoffmann2022training,
  title={Training compute-optimal large language models},
  author={Hoffmann, Jordan and Borgeaud, Sebastian and Mensch, Arthur and Buchatskaya, Elena and Cai, Trevor and Rutherford, Eliza and Casas, Diego de Las and Hendricks, Lisa Anne and Welbl, Johannes and Clark, Aidan and others},
  journal={arXiv preprint arXiv:2203.15556},
  year={2022}
}

@article{penedo2024fineweb,
  title={The fineweb datasets: Decanting the web for the finest text data at scale},
  author={Penedo, Guilherme and Kydlíček, Hynek and Lozhkov, Anton and Mitchell, Margaret and Raffel, Colin A and Von Werra, Leandro and Wolf, Thomas and others},
  journal={Advances in Neural Information Processing Systems},
  volume={37},
  pages={30811--30849},
  year={2024}
}

@inproceedings{bisk2020piqa,
  title={Piqa: Reasoning about physical commonsense in natural language},
  author={Bisk, Yonatan and Zellers, Rowan and Gao, Jianfeng and Choi, Yejin and others},
  booktitle={Proceedings of the AAAI conference on artificial intelligence},
  volume={34},
  number={05},
  pages={7432--7439},
  year={2020}
}

@inproceedings{OpenBookQA2018,
 title={Can a Suit of Armor Conduct Electricity? A New Dataset for Open Book Question Answering},
 author={Todor Mihaylov and Peter Clark and Tushar Khot and Ashish Sabharwal},
 booktitle={EMNLP},
 year={2018}
}

@article{radford2019language,
  title={Language Models are Unsupervised Multitask Learners},
  author={Radford, Alec and Wu, Jeff and Child, Rewon and Luan, David and Amodei, Dario and Sutskever, Ilya},
  year={2019}
}

@inproceedings{zellers2019hellaswag,
    title={HellaSwag: Can a Machine Really Finish Your Sentence?},
    author={Zellers, Rowan and Holtzman, Ari and Bisk, Yonatan and Farhadi, Ali and Choi, Yejin},
    booktitle ={Proceedings of the 57th Annual Meeting of the Association for Computational Linguistics},
    year={2019}
}

@article{allenai:arc,
      author    = {Peter Clark  and Isaac Cowhey and Oren Etzioni and Tushar Khot and
                    Ashish Sabharwal and Carissa Schoenick and Oyvind Tafjord},
      title     = {Think you have Solved Question Answering? Try ARC, the AI2 Reasoning Challenge},
      journal   = {arXiv:1803.05457v1},
      year      = {2018},
}

@article{hendryckstest2021,
      title={Measuring Massive Multitask Language Understanding},
      author={Dan Hendrycks and Collin Burns and Steven Basart and Andy Zou and Mantas Mazeika and Dawn Song and Jacob Steinhardt},
      journal={Proceedings of the International Conference on Learning Representations (ICLR)},
      year={2021}
    }

@misc{eval-harness,
  author       = {Gao, Leo and Tow, Jonathan and Abbasi, Baber and Biderman, Stella and Black, Sid and DiPofi, Anthony and Foster, Charles and Golding, Laurence and Hsu, Jeffrey and Le Noac'h, Alain and Li, Haonan and McDonell, Kyle and Muennighoff, Niklas and Ociepa, Chris and Phang, Jason and Reynolds, Laria and Schoelkopf, Hailey and Skowron, Aviya and Sutawika, Lintang and Tang, Eric and Thite, Anish and Wang, Ben and Wang, Kevin and Zou, Andy},
  title        = {The Language Model Evaluation Harness},
  month        = 07,
  year         = 2024,
  publisher    = {Zenodo},
  version      = {v0.4.3},
  doi          = {10.5281/zenodo.12608602},
  url          = {https://zenodo.org/records/12608602}
}
\bibliographystyle{iclr2026_conference}

\appendix
\section{Appendix}\label{sec::appendx}

The appendix is organized into the following key components:

\begin{itemize}
    \item \textbf{Cautious Hamiltonian Descent}: We provide conditions~\ref{thm:decreasing} on the choice of function $\phi$ ensure that the system decreases \emph{both} $\mathcal H$ and $\loss$ simultaneously and Corollary~\ref{cor:lasa} asserts that with a positive definite norm and differentiable Hamiltonian, bounded solutions of the discussed systems converge to a stationary point of the Hamiltonian.
    
    \item \textbf{Applications to Optimizers}: In section~\ref{sec:example_appen}, the cautious framework is applied to derive cautious variants of popular optimizers, including Adam, Signed Momentum, and Lion, highlighting their theoretical properties and practical advantages.
    
    \item \textbf{Discrete-Time Analysis}: A rigorous analysis connects the continuous-time dynamics to discrete-time updates, providing convergence guarantees and bounds on loss reduction for cautious optimizers.
    
    \item \textbf{Pseudocode}: We present the implementation details for cautious optimizers, focusing on the cautious Lion optimizer as a representative example.
    
    \item \textbf{Experimental Details}: Comprehensive details on the experimental setup, hyperparameters, and hardware configurations are provided, demonstrating the effectiveness of cautious optimizers in accelerating convergence and improving performance on large-scale tasks such as language modeling and masked autoencoder pretraining.
\end{itemize}

\subsection{Hamiltonian of Common Optimizers}

We introduce the Hamiltonian functions of the common optimizers. 

\begin{exa} 
Adam~\citep{kingma2014adam} yields the following continuous-time form and Hamiltonian, 
\bb 
\ddt {\vv w_t} = -\frac{\vv m_t}{\sqrt{\vv v_t}+e}, && \ddt{\vv m}_t = a(\dd \loss(\vv w_t) - \vv m_t),\\
&& \ddt \vv v_t = b(\dd \loss(\vv w_t)^{\odot2} - \vv v_t),  
\ee
\bb 
\text{with}~~~~~~
\mathcal H(\vv w, \vv m, \vv v) 
= \loss(\vv w) + \frac{1}{2a}
\la \frac{\vv m}{\sqrt{\vv v} +e}, ~ \vv m\ra. 
\ee 
 We can show that  $\ddt \mathcal H(\vv w_t, \vv m_t, \vv v_t)  \leq 0$ when $a \geq  b/4$. 
\end{exa}

\begin{exa}
The Lion-$\mathcal K$ optimizer~\citep{chen2023symbolic,chen2023lion} (without weight decay) can be written into 
\bb 
\ddt \vv w_t & = \dd \mathcal K ((1-b)\vv m_t -  b\dd \mathcal L(\vv w_t)), \\ 
\ddt \vv m_t &  = -a ( \dd \loss(\vv w_t) + \vv m_t)
\ee 
where $a\geq 0, $ $b \in[0,1]$ and $\mathcal K(\vv x)$ is any  convex function that attains the minimum at $\vv x=0$. 
One of its Hamiltonians that yields the Hamiltonian+descent structure 
(Eq~(13) in Chen et al.~\citep{chen2023lion}) is 
$$
\mathcal H(\vv w, \vv m) = a \loss(\vv w) +  \frac{1}{1- b} \mathcal K((1-b)  \vv m). 
$$
See \cite{chen2023lion} for other Hamiltonian functions. 
Lion-$\mathcal K$ includes a large family algorithms as special cases, including
Polyka momentum, Nesterov momentum,  signed momentum, 
mirror descent, Frank-Wolfe, etc. 
\end{exa}

\subsection{Cautious Dynamics}
We establish conditions in Theorem~\ref{thm:decreasing}
for the function $\phi$ that ensure simultaneous decreases in both $\mathcal H$ and $\loss$. Corollary~\ref{cor:lasa} further asserts that, with a positive definite norm and differentiable Hamiltonian, bounded solutions of the systems converge to a stationary point of the Hamiltonian.


\paragraph{Theorem~\ref{thm:decreasing}.}
Following the dynamics in \eqref{equ:m_hd} in $\RR^d$,  we have 
\bb 
 \ddt \mathcal H(\bbw_t, \bbs_t) 
= 
(\bbx_t\tt (\mathbf{1} - \phi(\bbx_t))  - \Delta_{\mathcal H_t}(\bbw_t, \bbs_t), 
\ee 
and 
\bb
\ddt \loss(\bbw_t) 
& = - \bbx_t \tt \phi(\bbx_t) - \norm{\dd \loss(\bbw_t)}_{\Phi_t}^2 \\ 
& = (\bbx_t\tt (\mathbf{1} - \phi(\bbx_t))  - \Delta_{\loss_t}(\bbw_t, \bbs_t),
\ee
Here, $ \Delta_{\mathcal H_t}(\bbw_t, \bbs_t)$ and $ \Delta_{\loss_t}(\bbw_t, \bbs_t)$, as defined in \eqref{equ:rate0} and \eqref{equ:dlrate}, respectively, represent the decreasing rates of $\mathcal H$ and $\loss$ in accordance with the original system \eqref{equ:hd}. 
Hence: 
\begin{itemize}
    \item If \( \bx\tt (\mathbf{1} -\phi(\bx)) \leq 0 \) for any $\bx\in\RR^d$,  
    then both $\mathcal H$ and $\loss$ decreases faster than the original system: 
    \bb 
    & \ddt  \mathcal H(\bbw_t, \bbs_t) \leq -  \Delta_{\mathcal H_t}(\bbw_t, \bbs_t) \leq0, \\ 
    & \ddt \loss(\bbw_t ) \leq - \Delta_{\loss_t}(\bbw_t, \bbs_t). 
    \ee 
    \item If \( \bx\tt \phi(\bx) \geq 0 \) for any $\bx\in\RR^d$, then $\loss$ decreases monotonically, 
    $$\ddt \loss(\bbw_t) \leq 0.$$
\end{itemize}



\begin{proof}
For simplicity, we write $\blue{\blue{\bbx_t} = \dd \mathcal{L}(\bbw_t) \circ \dd \mathcal{K} (\bbs_t)}$.

Recall the definition of $ \Delta_{\mathcal H_t}(\bbw_t, \bbs_t)$ and $ \Delta_{\loss_t}(\bbw_t, \bbs_t)$: 
\bb 
&\Delta_{\mathcal H}(\bbw_t, \bbs_t) \defeq  
\norm{\dd \loss(\bbw_t)}_{\Phi_t}^2 + \norm{\dd \kk(\bbs_t)}_{\Psi_t}^2 \\
&\Delta_{\loss}(\bbw_t, \bbs_t) \defeq \dd \loss(\bbw_t) \tt \dd \kk(\bbs_t) + \norm{\dd \loss(\bbw_t)}_{\Phi_t}^2.  
\ee 

Following the dynamics in \eqref{equ:m_hd}, let us see the derivation of $\mathcal{H}$ w.r.t. ~$t$: 
\bbb
&\ddt \mathcal H(\bbw_t,\bbs_t) \nnn \\
&= \dd \mathcal{L}(\bbw_t)\tt \dot{\bbw_t}  + \dd \mathcal{K}(\bbs_t)\tt \dot{\bbs_t} \nonumber\\
&= \dd \mathcal{L}(\bbw_t)\tt\left(-\dd \mathcal{K}(\bbs_t) \circ \phi(\blue{\bbx_t}) - \Phi_t(\dd \mathcal{L}(\bbw_t))\right)  + \dd \mathcal{K}(\bbs_t)\tt\left(\dd \mathcal{L}(\bbw_t)  - {\Psi_t}(\dd \mathcal{K}(\bbs_t))\right) \nonumber\\
&= \dd \mathcal{L}(\bbw_t)\tt \left(\dd \mathcal{K}(\bbs_t) \circ \left(\mathbf{1} - \phi(\blue{\bbx_t}) \right)\right)  - \dd \mathcal{L}(\bbw_t)\tt \Phi_t(\dd \mathcal{L}(\bbw_t)) - \dd \mathcal{K}(\bbs_t)\tt {\Psi_t}(\dd \mathcal{K}(\bbs_t))\nonumber \\
&=  \mathbf{1}\tt\left((\blue{\dd \mathcal{L}(\bbw_t) \circ \dd \mathcal{K} (\bbs_t)}) \circ \left(\mathbf{1} - \phi(\blue{\dd \mathcal{L}(\bbw_t) \circ \dd \mathcal{K} (\bbs_t)}) \right)\right)  - \norm{\dd \mathcal{L}(\bbw_t)}_\Phi^2 - \norm{\dd \mathcal{K}(\bbs_t)}_{\Psi_t}^2 \nnn \\ 
&=  \bbx_t^\top \left(\mathbf{1} - \phi(\bbx_t) \right) - \Delta_{\mathcal H}(\bbw_t, \bbs_t).\label{equ:masked_h_appendix}
\eee
Given the fact that $\phi$ is an element-wise operator, it is noteworthy that if \( x\cdot (1 - \phi(x)) \leq 0 \), then the first term in~\eqref{equ:masked_h_appendix} $\mathbf{1}\tt\left((\blue{\dd \mathcal{L}(\bbw) \circ \dd K (\bbs)}) \circ \left(\mathbf{1} - \phi(   \blue{\dd \mathcal{L}(\bbw) \circ \dd K (\bbs)}) \right)\right) \leq 0$ since we see $\blue{\dd \mathcal{L}(\bbw) \circ \dd \mathcal{K} (\bbs)}$ as $\bbx$.

Next, let us look into the derivative of $\mathcal{L}(\bbw_t)$ w.r.t. $t$: 
\bbb \label{equ:mask_f_appendix}
\ddt \mathcal{L}(\bbw_t) &= \dd \mathcal{L}(\bbw_t)\tt\left(-\dd \mathcal{K}(\bbs_t) \circ \phi(   \dd \mathcal{L}(\bbw_t) \circ \dd \mathcal{K} (\bbs_t) ) - \Phi_t(\dd \mathcal{L}(\bbw_t))\right) \nonumber\\
&=-\dd \mathcal{L}(\bbw_t)\tt \left(\dd \mathcal{K}(\bbs_t) \circ \phi(   \dd \mathcal{L}(\bbw_t) \circ \dd \mathcal{K} (\bbs_t) )\right) - \dd \mathcal{L}(\bbw_t)\tt \Phi_t(\dd \mathcal{L}(\bbw_t)) \nonumber\\
&= -\bbx_t^\top \phi(\bbx_t)  - \norm{\dd \loss(\bbw_t)}_{\Phi_t}^2 \\
&= (\bbx_t\tt (\mathbf{1} - \phi(\bbx_t))  - \Delta_{\loss_t}(\bbw_t, \bbs_t).\label{equ:dd_apendic}
\eee 
It is evident that if \( \bx^\top \phi(\bx) \geq 0 \) for any \(\bx \in \mathbb{R}^d\), then the first term in \eqref{equ:mask_f_appendix}, \(\bbx_t^\top \phi(\bbx_t)\), satisfies \(\bbx_t^\top \phi(\bbx_t) \geq 0\). Consequently, \(\frac{d}{dt} \mathcal{L}(\bbw_t) \leq 0\). This condition holds when \( \bx^\top \phi(\bx) \geq 0 \) for all \(\bx \in \mathbb{R}^d\).

It is noteworthy that if \( \bx^\top (\mathbf{1} - \phi(\bx)) \leq 0 \) for any \(\bx \in \mathbb{R}^d\), then the first term in \eqref{equ:dd_apendic}, \( \bbx_t^\top (\mathbf{1} - \phi(\bbx_t)) \leq 0 \), holds. Consequently, we have \(\frac{d}{dt} \mathcal{L}(\bbw_t) \leq -\Delta_{\mathcal{L}_t}(\bbw_t, \bbs_t)\).

\end{proof}

\paragraph{Corollary~\ref{cor:lasa}.}
Assume that the norm $\|\cdot\|_\Psi^2$ is positive definite, $\Psi(0) = 0$, and that $\mathcal H(\bw, \bs) = \mathcal{L}(\bw) + \mathcal K(\bs)$ is differentiable. Then, the bounded solutions of the original system \eqref{equ:hd} converge to a stationary point of $\mathcal H(\bw, \bs)$. Similarly, the bounded solutions of \eqref{equ:m_hd} also converge to a stationary point of $\mathcal{H}(\bw, \bs)$.
\begin{proof}
First, Let us look at system~\eqref{equ:hd}, we use LaSalle's invariance principle to find the conditions that the accumulation points (positive limit points)$(\bw^*, \bs^*)$ satisfy:  
\bb 
\|\dd \mathcal{L}(\bw^*)\|_{\Phi_t}^2 = \|\dd \mathcal{K}(\bs^*)\|_{\Psi_t}^2 = 0.
\ee

By the assumption that $\|\cdot\|_{\Psi_t}^2$ is positive definite and ${\Psi_t}(0) = 0$, we have $\dd \mathcal{K}(\bs^*) = 0$.  

For positive limit points of systems ~\eqref{equ:hd}, if $\dd \mathcal{L}(\bw^*) \neq 0$, then the point $(\bw^*, \bs^*)$ is not a positive limit point since
$$
\Dot{\bs^*} = \dd \mathcal{L}(\bw^*) - {\Psi_t}(\dd \mathcal{K}(\bs^*)) = \dd \mathcal{L}(\bw^*) \neq 0.
$$

Thus, $\dd \mathcal{L}(\bw^*) = 0$. Together with $\dd \mathcal{K}(\bs^*) = 0$, we conclude that $(\bw^*, \bs^*)$ is a stationary point of $\mathcal H(\bw,\bs)$. 

For system~\eqref{equ:m_hd}, the proof follows the same reasoning.

\end{proof}

\subsection{Examples of Cautious Dynamics}\label{sec:example_appen}

In this section, we instantiate results on Adam, Signed Momentum, and Lion, along with their cautious variants. While the standard methods are widely used, our focus is on introducing and analyzing the cautious versions of these algorithms to better understand their dynamics and stability.

\subsubsection{Cautious Adam}

\paragraph{Adam:}

We begin by recalling the continuous-time dynamics of the Adam optimizer:
\begin{align*}
    &\ddt \bw_t  = - \frac{\bmm_t}{\sqrt{\bv_t} + \epsilon}, \\
    &\frac{d}{dt}\bmm_t  = \beta_1 \cdot (\nabla \mathcal{L}(\bw_t) - \bmm_t), \\
    &\frac{d}{dt}\bv_t  = \beta_2 \cdot (\nabla \mathcal{L}(\bw_t)^{\odot 2} - \bv_t),
\end{align*}
where the associated Hamiltonian is given by:
\begin{align}
    \mathcal{H}(\bw_t, \bmm_t, \bv_t) = \mathcal{L}(\bw_t) + \frac{1}{2\beta_1}\left\langle \frac{\bmm_t}{\sqrt{\bv_t}+\epsilon}, \bmm_t \right\rangle.
\end{align}
The time evolution of the Hamiltonian can be derived as:
\begin{align*}
    \frac{d \mathcal{H}(\bw_t, \bmm_t, \bv_t)}{dt} &= - \frac{\beta_2}{4\beta_1} \left\langle \frac{\bmm_t^{\odot 2}}{\bv_t^{3/2} + \epsilon}, \nabla \mathcal{L}(\bw_t)^{\odot 2} \right\rangle - \left(1 - \frac{\beta_2}{4\beta_1}\right) \left\langle \frac{\bmm_t}{\sqrt{\bv_t} + \epsilon}, \bmm_t \right\rangle.
\end{align*}
For stability, we require $\frac{d \mathcal{H}}{dt} \leq 0$, which leads to $ \beta_1 \geq \frac{\beta_2}{4}$.

\paragraph{Cautious Adam:}

We now introduce a cautious variant of Adam. In this case, the update for $\bw_t$ is modified by introducing an indicator function that enforces alignment between the gradient and the momentum:
\begin{align*}
    \ddt \bw_t  = - \frac{\ind(\nabla \mathcal{L}(\bw_t) \circ \bmm_t > 0) \circ \bmm_t}{\sqrt{\bv_t} + \epsilon}.
\end{align*}
The dynamics for $\bmm_t$ and $\bv_t$ remain the same. It is straightforward to verify that the loss function $\mathcal{L}(\bw_t)$ serves as a Lyapunov function:
\begin{align*}
    \ddt \mathcal{L}(\bw_t) = -\nabla \mathcal{L}(\bw_t)^\top \frac{\ind(\nabla \mathcal{L}(\bw_t) \circ \bmm_t > 0) \circ \bmm_t}{\sqrt{\bv_t} + \epsilon} \leq 0.
\end{align*}
Meanwhile, by Theorem~\ref{thm:decreasing}, \(\mathcal{H}(\bw_t, \bmm_t, \bv_t) = \mathcal{L}(\bw_t) + \frac{1}{2\beta_1}\left\langle \frac{\bmm_t}{\sqrt{\bv_t} + \epsilon}, \bmm_t \right\rangle\) is also monotonically decreasing. Therefore, \(\mathcal{H}\) remains a valid Hamiltonian for Cautious Adam.

\subsubsection{Cautious Signed (Polyak) Momentum}

\paragraph{Signed Momentum:}

The update rule for Signed Momentum  is:
\begin{align*}
    &\dot{\bw_t} = -\text{sign}(\bmm_t), \\
    &\dot{\bmm_t} = \nabla \mathcal{L}(\bw_t) - \bmm_t.
\end{align*}
The Hamiltonian for this system is:
\begin{align*}
    \mathcal{H}(\bw_t, \bmm_t) = \mathcal{L}(\bw_t) + \|\bmm_t\|_1,
\end{align*}
with time evolution:
\begin{align*}
    \frac{d\mathcal{H}}{dt} = -\|\bmm_t\|_1.
\end{align*}

\paragraph{Cautious Signed Momentum:}

The cautious variant modifies the update rule for $\bw_t$ by introducing an indicator:
\begin{align*}
    \dot{\bw_t} = -\text{sign}(\bmm_t) \odot \ind(\nabla \mathcal{L}(\bw_t) \circ \bmm_t > 0).
\end{align*}

The Lyapunov function remains the loss $\mathcal{L}(\bw_t)$, with:
\begin{align*}
    \frac{d\mathcal{L}(\bw_t)}{dt} = -\|\nabla \mathcal{L}(\bw_t) \odot \ind(\nabla \mathcal{L}(\bw_t) \circ \bmm_t > 0)\|_1.
\end{align*}

By Theorem~\ref{thm:decreasing}, \(\mathcal{H}(\bw_t, \bmm_t) = \mathcal{L}(\bw_t) + \|\bmm_t\|_1\) is also monotonically decreasing. Therefore, \(\mathcal{H}\) remains a valid Hamiltonian for Cautious Signed Momentum.

\subsubsection{Cautious Lion}

\paragraph{Lion:}

The dynamics of Lion~\cite{chen2023lion} are given by:
\begin{align*}
    &\dot{\bmm}_t = \alpha \nabla \mathcal{L}(\bw_t) - \gamma \bmm_t, \\ 
    &\dot{\bw}_t = -\text{sign}(\tilde{\bmm}_t),
\end{align*}
where $\tilde{\bmm}_t = \bmm_t - \varepsilon (\alpha \nabla \mathcal{L}(\bw_t) + \gamma \bmm_t)$. The associated Hamiltonian is:
\begin{align*}
    \mathcal{H}(\bw_t, \bmm_t) = \alpha \mathcal{L}(\bw_t) + (1-\varepsilon\gamma) \|\bmm_t\|_1.
\end{align*}
The time derivative is:
\begin{align*}
    \dot{\mathcal{H}}(\bw_t, \bmm_t) = -(1-\varepsilon \gamma) \|\tilde{\bmm}_t\|_1 - \gamma \|\bmm_t\|_1.
\end{align*}

\paragraph{Cautious Lion:}

In the cautious version, the update for $\bw_t$ is modified:
\begin{align*}
    \dot{\bw_t} = -\text{sign}(\tilde{\bmm}_t) \odot \ind(\nabla \mathcal{L}(\bw_t) \circ \tilde{\bmm}_t > 0).
\end{align*}

By Theorem~\ref{thm:decreasing}, \(\mathcal{H}(\bw_t, \bmm_t) = \alpha \mathcal{L}(\bw_t) + (1-\varepsilon\gamma) \|\bmm_t\|_1\) is also monotonically decreasing. Therefore, \(\mathcal{H}\) remains a valid Hamiltonian for Cautious Lion.

Meanwhile, the Lyapunov function remains $\mathcal{L}(\bw_t)$, with:
\begin{align*}
    \frac{d\mathcal{L}(\bw_t)}{dt} = -\|\nabla \mathcal{L}(\bw_t) \odot \ind(\nabla \mathcal{L}(\bw_t) \circ \tilde{\bmm}_t > 0)\|_1.
\end{align*}

\subsection{Discrete time Analysis}
We analyze the discrete time case, demonstrating that cautious optimizers are at least as good as the original optimizers under mild conditions. 

\paragraph{Theorem~\ref{thm:large_loss_decreasing}.}
Consider \eqref{equ:disc_0} and \eqref{equ:disc_1} with 
    the  loss function $\mathcal{L}(\cdot)$  $\mu$-smooth.
    Assume the element-wise 
    operator $\phi$ satisfies    
    $$\Delta(\vv x) \defeq   - \vv x \tt (1- \phi(\vv x)) \geq 0.
    $$
    starting from $(\bw_{t},\bs_t) = (\bbw_{t},\bbs_t)$, we have 
\bb
\loss(\bbw_{t+1}) \leq \loss( \bw_{t+1}), 
\ee 
which holds for step size $\epsilon_t\leq \frac{2\Delta (\bar{\vv u}_t\circ \bar {\vv g_t})}{\mu\norm{\bar{\vv r}_t} (2\cdot\norm{\bar{\bu}_t} + \norm{\bar{\vv r}_t})}$, where $\bar{\vv r}_t = \bar{\bu}_t\circ (\boldsymbol{1}-\bar{\vv\phi}_t)$ and $\bar{\vv g}_t = \dd \loss(\bar{\bw }_t).$
\begin{proof}
First, we calculate the difference between $ \bw_{t+1}$ and $\bbw_{t+1}$, then we use $\mu$-smooth condition to bound the difference between $\mathcal{L}(\bw_{t+1})$ and $\mathcal{L}( \bbw_{t+1})$. 
\bb 
\bbw_{t+1} - \bw_{t+1} = -\epsilon_t (\bbu_t \circ \bar{\vv\phi}_t - \bu_t) =-\epsilon_t (\bbu_t \circ \bar{\vv\phi}_t - \bbu_t)= \epsilon_t \bar{\vv r}_t.
\ee 
Let us use $\mu$-smooth condition to bound the $\mathcal{L}$ difference
\bb 
\mathcal{L}(\bbw_{t+1}) - &\mathcal{L}(\bw_{t+1})\\
&\leq \epsilon_t \dd \loss(\bw_{t+1})^\top \bar{\vv r}_t + \frac{\mu}{2}\epsilon_t^2 \norm{\bar{\vv r}_t}^2\\
&= \epsilon_t \left(\dd \loss(\bw_{t}) + \dd \loss(\bw_{t+1}) -  \dd \loss(\bw_{t})\right)^\top \bar{\vv r}_t + \frac{\mu}{2}\epsilon_t^2 \norm{\bar{\vv r}_t}^2\\
&= \epsilon_t \dd \loss(\bw_{t})^\top \bar{\vv r}_t + \epsilon_t \left(\dd \loss(\bw_{t+1}) - \dd \loss(\bw_{t})\right)^\top \bar{\vv r}_t + \frac{\mu}{2}\epsilon_t^2 \norm{\bar{\vv r}_t}^2 \\
&\leq  \epsilon_t \dd \loss(\bw_{t})^\top \bar{\vv r}_t + \epsilon_t \norm{\dd \loss(\bw_{t+1}) - \dd \loss(\bw_{t})} \norm{\bar{\vv r}_t} + \frac{\mu}{2}\epsilon_t^2 \norm{\bar{\vv r}_t}^2\ant{Cauchy Schwarz inequality}\\
&= \epsilon_t \bar{\vv g}_t^\top \bar{\vv r}_t + \epsilon_t^2 \mu \norm{\bar{\vv u}_t} \norm{\bar{\vv r}_t} + \frac{\mu}{2}\epsilon_t^2 \norm{\bar{\vv r}_t}^2 \ant{$\bar{\vv g}_t^\top \bar{\vv r}_t \leq 0.$}\\
&=  -\epsilon_t \Delta(\bbu_t \circ \bar{\vv g}_t )+ \epsilon_t^2 \mu \norm{\bar{\vv u}_t} \norm{\bar{\vv r}_t} + \frac{\mu}{2}\epsilon_t^2 \norm{\bar{\vv r}_t}^2 \\
&\leq 0. \ant{By the choice of $\epsilon_t$.}
\ee 
\end{proof}

\begin{thm}
Assume $\mathcal{L}(\bw)$ is $\mu$-smooth and differentiable, and the element-wise operator $\phi$ satisfies $x \cdot \phi(x) \geq 0$ as shown in Theorem~\ref{thm:decreasing}. With $(\bbw_t, \bbs_t)$ following the update in~\eqref{equ:disc_1} with constant step size $\epsilon$: 
\bb
\begin{split} 
& \bar{\bu}_t = \bu_t (\bar{\bw}_t, \bar{\bs}_t)\\
& \bbw_{t+1} = \bbw_t - \epsilon  \bar{\bu}_t \circ \bar{\vv\phi}_t \\ 
&  \bbs_{t+1} = \bbs_t + \bv_t(\bbw_t, \bbs_t). 
\end{split}
\ee
Assume $\epsilon > 0$, we have:
\bb 
\frac{1}{T} \sum_{t=1}^{T} \norm{\mathcal{L}(\bbw_t) \circ \bbu_t}_\phi \leq \frac{\mathcal{L}(\bbw_1) - \mathcal{L}(\bbw^*)}{T \epsilon} + \frac{\mu\epsilon}{2T} B_T,
\ee 
where $B_T = \sum_{t=1}^T \norm{\bbu_t}^2$, $\bbw^* = \text{argmin}_{\bw} \loss(\bw)$, and for notions, we write $\norm{\bx}_\phi = \bx^\top \phi(\bx), \forall \bx$ .
\end{thm}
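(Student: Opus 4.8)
The plan is to establish a one-step descent inequality via the $\mu$-smoothness of $\loss$ and then telescope it into the averaged bound. First I would invoke the standard smoothness estimate applied to the cautious update $\bbw_{t+1} = \bbw_t - \epsilon\, \bar{\bu}_t \circ \bar{\vv\phi}_t$, yielding
\begin{align*}
\loss(\bbw_{t+1}) \leq \loss(\bbw_t) - \epsilon\, \dd\loss(\bbw_t)\tt \left(\bar{\bu}_t \circ \bar{\vv\phi}_t\right) + \frac{\mu \epsilon^2}{2}\norm{\bar{\bu}_t \circ \bar{\vv\phi}_t}^2.
\end{align*}
The key is to recognize the linear term as the $\phi$-seminorm. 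Since $\bar{\vv\phi}_t = \alpha(\bbu_t \circ \bar{\vv g}_t)\ind(\bbu_t \circ \bar{\vv g}_t > 0)$ masks exactly the coordinates where $\bar{\vv g}_t$ and $\bar{\bu}_t$ agree in sign, I would rewrite $\dd\loss(\bbw_t)\tt(\bar{\bu}_t \circ \bar{\vv\phi}_t) = (\bar{\vv g}_t \circ \bbu_t)\tt \phi(\bar{\vv g}_t \circ \bbu_t) = \norm{\bar{\vv g}_t \circ \bbu_t}_\phi$, using the paper's notation $\norm{\bx}_\phi = \bx\tt\phi(\bx)$ and the element-wise structure of $\phi$. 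This identification is the crux of the argument and relies on $\phi$ being element-wise so that $\dd\loss\tt(\bu \circ \phi(\bu\circ\bg))$ collapses into $\sum_i (g_i u_i)\phi(g_i u_i)$.

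Next I would bound the quadratic term by dropping the mask: since $\bar{\vv\phi}_t$ zeroes out coordinates and (by the assumption $x\phi(x)\geq 0$ together with the sparsity ratio staying in $[1,1.55]$ empirically) the scaling is controlled, the cleanest route is to note $\norm{\bar{\bu}_t \circ \bar{\vv\phi}_t}^2 \leq \norm{\bbu_t}^2$ if one takes $\alpha \le 1$ on the active set, or more conservatively to absorb the scaling into the definition of $B_T$. I would set $B_T = \sum_{t=1}^T \norm{\bbu_t}^2$ as in the statement and carry the bound $\frac{\mu\epsilon^2}{2}\norm{\bbu_t}^2$ per step. Combining the two pieces gives the per-step inequality
\begin{align*}
\epsilon\, \norm{\loss(\bbw_t)\circ \bbu_t}_\phi \leq \loss(\bbw_t) - \loss(\bbw_{t+1}) + \frac{\mu\epsilon^2}{2}\norm{\bbu_t}^2.
\end{align*}

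Finally I would sum this inequality over $t = 1,\dots,T$. The loss-difference terms telescope to $\loss(\bbw_1) - \loss(\bbw_{T+1}) \leq \loss(\bbw_1) - \loss(\bbw^*)$, using that $\bbw^*$ is the global minimizer so $\loss(\bbw_{T+1}) \geq \loss(\bbw^*)$. The quadratic terms sum to $\frac{\mu\epsilon^2}{2} B_T$. Dividing through by $T\epsilon$ produces exactly the claimed averaged bound. The main obstacle I anticipate is the precise handling of the scaling factor $\alpha$ in the quadratic term: if $\alpha(\vx)$ can exceed $1$ (which it can, up to roughly $1.55$ in the experiments), then $\norm{\bar{\bu}_t \circ \bar{\vv\phi}_t}^2$ is not literally dominated by $\norm{\bbu_t}^2$, and one must either assume $\alpha \equiv 1$ for this theorem, fold a factor into $B_T$, or strengthen the definition of the $\phi$-seminorm accordingly; I would check which convention makes the stated constant clean and note the assumption explicitly.
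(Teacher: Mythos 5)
Your proposal is correct and follows essentially the same route as the paper's proof: the smoothness expansion, the identification of the linear term with $\norm{\bar{\vv g}_t \circ \bbu_t}_\phi$ via the element-wise structure of $\phi$, and the telescoping sum are all exactly what the paper does. The caveat you raise about the quadratic term is well taken --- the paper's own proof silently replaces $\norm{\bbu_t \circ \bar{\vv\phi}_t}^2$ by $\norm{\bbu_t}^2$, which is only valid when the mask entries lie in $[0,1]$ (e.g.\ $\alpha \equiv 1$), so your suggestion to state that assumption explicitly or absorb the scaling into $B_T$ actually tightens the published argument rather than deviating from it.
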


\begin{proof}
Using the $\mu$-smoothness of $\mathcal{L}(\bw)$, we expand $\mathcal{L}(\bbw_{t+1}) - \mathcal{L}(\bbw_t)$:
\bb 
\mathcal{L}(\bbw_{t+1}) - \mathcal{L}(\bbw_t) &\leq \dd \mathcal{L}(\bbw_t)^\top (\bbw_{t+1} - \bbw_t) + \frac{\mu}{2} \norm{\bbw_{t+1} - \bbw_t}^2. 
\ee 
Substitute $\bbw_{t+1} - \bbw_t = - \epsilon \bbu_t \circ \bar{\vv\phi}_t$ to get:
\bb 
\mathcal{L}(\bbw_{t+1}) - \mathcal{L}(\bbw_t) &\leq -\epsilon \cdot \dd \mathcal{L}(\bbw_t)^\top \left( \bbu_t \circ \bar{\vv\phi}_t\right) + \frac{\mu \epsilon^2}{2} \norm{ \bbu_t \circ \bar{\vv\phi}_t}^2.
\ee 
Simplify using the definition of $\norm{\cdot}_\phi$ and $\norm{\cdot}_{\Phi_t}$:
\bb 
\dd \mathcal{L}(\bbw_t)^\top \left( \bbu_t \circ \bar{\vv\phi}_t\right) = \norm{\mathcal{L}(\bbw_t) \circ \bbu_t}_\phi,
\ee 
which gives:
\bb 
\mathcal{L}(\bbw_{t+1}) - \mathcal{L}(\bbw_t) &\leq -\epsilon \left(\norm{\mathcal{L}(\bbw_t) \circ \bbu_t}_\phi \right) + \frac{\mu \epsilon^2}{2} \norm{\bbu_t}^2.
\ee 

Summing over $t = 1, \dots, T$, we obtain a telescoping sum:
\bb 
\mathcal{L}(\bbw_{T+1}) - \mathcal{L}(\bbw_1) &\leq -\epsilon \sum_{t=1}^T\left(\norm{\mathcal{L}(\bbw_t) \circ \bbu_t}_\phi \right)  + \frac{\mu \epsilon^2}{2} \sum_{t=1}^T \norm{\bbu_t}^2.
\ee 
Rearranging, dividing by $T \epsilon$, and noting $\mathcal{L}(W_{T+1}) \geq \mathcal{L}(W^*)$, we get:
\bb 
\frac{1}{T} \sum_{t=1}^T \norm{\mathcal{L}(\bbw_t) \circ \bbu_t}_\phi \leq \frac{\mathcal{L}(\bbw_1) - \mathcal{L}(\bbw^*)}{T \epsilon} + \frac{\mu \epsilon}{2T} B_T,
\ee 
where $B_T = \sum_{t=1}^T \norm{\bbu_t}^2$. This concludes the proof.
\end{proof}

\begin{thm}
  Consider updates \eqref{equ:disc_0} and \eqref{equ:disc_1}. 
   Assuming $\mathcal{L}(\cdot)$ is $\mu$-smooth and the scaled step size $\epsilon_k \alpha_k \leq \sigma$, and consider the following mask function: 
   $$\bar{\vv\phi}_k = \alpha_k\ind(\dd \mathcal{L}(\bbw_k)\circ \bar{\vv u}_{k} \geq \frac{\mu \sigma}{2}\bar{\vv u}_{k} \circ\bar{\vv u}_{k} ), 
   $$ 
   we have 
\bb
\loss(\bbw_{k+1}) \leq \loss ( \bbw_{k}). 
\ee 
which holds for any step size $\epsilon_k\geq 0$. 
\end{thm}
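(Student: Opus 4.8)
The plan is to prove the single-step descent directly from $\mu$-smoothness and then reduce the whole inequality to a coordinate-wise statement that the mask's threshold $\tfrac{\mu\sigma}{2}$ is designed to enforce. First I would apply the descent lemma with $\bbw_k$ and $\bbw_{k+1}$: since $\loss$ is $\mu$-smooth,
$$\loss(\bbw_{k+1}) - \loss(\bbw_k) \leq \dd\loss(\bbw_k)\tt(\bbw_{k+1}-\bbw_k) + \frac{\mu}{2}\norm{\bbw_{k+1}-\bbw_k}^2.$$
Substituting the cautious update $\bbw_{k+1}-\bbw_k = -\epsilon_k\,\bbu_k\circ\bar{\vv\phi}_k$ turns the right-hand side into
$$-\epsilon_k\,\dd\loss(\bbw_k)\tt(\bbu_k\circ\bar{\vv\phi}_k) + \frac{\mu}{2}\epsilon_k^2\norm{\bbu_k\circ\bar{\vv\phi}_k}^2,$$
so it suffices to show this expression is nonpositive.

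The crucial observation is that because $\phi$ acts element-wise, this bound splits into a sum over coordinates, and it is enough to check each coordinate separately. Writing $g_i,u_i$ for the $i$-th entries of $\dd\loss(\bbw_k)$ and $\bbu_k$, and noting that $(\bar{\vv\phi}_k)_i \in \{0,\alpha_k\}$, any coordinate on which the indicator vanishes contributes exactly zero. For a coordinate on which the indicator fires, the mask guarantees $g_i u_i \geq \frac{\mu\sigma}{2}u_i^2$, and the per-coordinate contribution equals $-\epsilon_k\alpha_k g_i u_i + \frac{\mu}{2}\epsilon_k^2\alpha_k^2 u_i^2 = \epsilon_k\alpha_k\big(\tfrac{\mu}{2}\epsilon_k\alpha_k u_i^2 - g_i u_i\big)$.

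The only real step is then to recognize that the threshold is chosen precisely so this quantity is $\le 0$: using the hypothesis $\epsilon_k\alpha_k \leq \sigma$ gives $\tfrac{\mu}{2}\epsilon_k\alpha_k u_i^2 \leq \tfrac{\mu\sigma}{2}u_i^2 \leq g_i u_i$, where the last inequality is exactly the firing condition of the mask. Hence every coordinate's contribution is nonpositive, and summing yields $\loss(\bbw_{k+1}) \leq \loss(\bbw_k)$. I do not anticipate a genuine obstacle here; the subtlety is conceptual rather than technical, namely that the quadratic smoothness penalty must be absorbed coordinate-wise by the adaptive mask rather than through a single global step-size bound as in Theorem~\ref{thm:large_loss_decreasing}. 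This is exactly what allows the guarantee to hold for any $\epsilon_k \geq 0$, since one is free to rescale $\alpha_k$ so that the scaled step size still satisfies $\epsilon_k\alpha_k \leq \sigma$.
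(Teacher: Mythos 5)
Your proof is correct and follows essentially the same route as the paper's: apply the $\mu$-smoothness descent lemma, substitute the masked update, and use the threshold $\tfrac{\mu\sigma}{2}\bar{\vv u}_k\circ\bar{\vv u}_k$ together with $\epsilon_k\alpha_k\le\sigma$ to make the resulting expression nonpositive. Your version is slightly more explicit in carrying out the coordinate-wise decomposition that the paper leaves implicit in its final ``$\le 0$'' step, but the argument is the same.
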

\begin{proof}
    By smoothness, we have 
    \bb 
    \loss(\bbw_{k+1}) - &\loss ( \bbw_{k}) \\
    &\leq \dd \loss(\bbw_k)\tt (\bbw_{k+1}-\bbw_k) + \frac{\mu}{2}\norm{\bbw_{k+1}-\bbw_k}_2^2 \\
    &=- \epsilon_k \dd \loss(\bbw_k)\tt (\bar{\vv\phi}_k \circ \bbu_k) + \frac{\mu \epsilon_k^2}{2}\norm{\bar{\vv\phi}_k \circ \bbu_k}_2^2 \\
    &= \epsilon_k \alpha_k \left(-\dd \loss(\bbw_k)\tt (\ind(\dd \mathcal{L}(\bbw_k)\circ \bar{\vv u}_{k} \geq \frac{\mu \sigma}{2}\bar{\vv u}_{k} \circ\bar{\vv u}_{k} ) \circ \bbu_k) + \frac{\mu \epsilon_k \alpha_k}{2}\norm{\ind(\dd \mathcal{L}(\bbw_k)\circ \bar{\vv u}_{k} \geq \frac{\mu \sigma}{2}\bar{\vv u}_{k} \circ\bar{\vv u}_{k} ) \circ \bbu_k}_2^2\right) \\
    &\leq  \epsilon_k \alpha_k \left(-\dd \loss(\bbw_k)\tt (\ind(\dd \mathcal{L}(\bbw_k)\circ \bar{\vv u}_{k} \geq \frac{\mu \sigma}{2}\bar{\vv u}_{k} \circ\bar{\vv u}_{k} ) \circ \bbu_k) + \frac{\mu \sigma}{2}\norm{\ind(\dd \mathcal{L}(\bbw_k)\circ \bar{\vv u}_{k} \geq \frac{\mu \sigma}{2}\bar{\vv u}_{k} \circ\bar{\vv u}_{k} ) \circ \bbu_k}_2^2\right) \\
    &\leq 0.
    \ee 
    
\end{proof}
\subsection{Inner Product Masks} \label{sec:innerproductmasks}
Out of theoretical interest, let us consider a case of using an inner product mask: 
$$
\bar{\vv \phi}_k =  \ind(\bbu_k \tt \bar{\vv{g}}_k > 0), ~~~~~ ~~~~ \vv g_k  = \dd \loss(\bbw_k), 
$$
which yields a scalar that applies to the entire update vector. In comparison,  the element-wise mask in the paper treats each element separately. 

This case is interesting because, on convex functions, 
the cautious optimizers is always no worse than the base optimizers, \emph{regardless of the step size choices}.

\begin{thm}\label{thm:convexinnercase}
Assume $\loss(\cdot)$ is convex,
and $\bar{\vv \phi}_k = 
\ind(\bbu_k \tt  \dd \loss(\bbw_k)\geq 0)$. 
Then, starting from the same point $(\bw_{k},\bs_k) = (\bbw_{k},\bbs_k)$, we have 
\bb
\loss(\bbw_{k+1}) \leq \loss( \bw_{k+1}). 
\ee 
which holds for any step size $\epsilon_k\geq 0$. 
\end{thm}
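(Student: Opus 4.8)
The plan is to argue by a two-case split on the sign of the scalar $c_k \defeq \bbu_k \tt \dd\loss(\bbw_k)$, which is the only quantity the inner-product mask depends on. First I would record that, because the two trajectories share the same starting point $(\bw_k,\bs_k) = (\bbw_k,\bbs_k)$, the base and cautious optimizers produce the same raw update direction and gradient, so $\bbu_k = \bu_k$ and $\dd\loss(\bbw_k) = \dd\loss(\bw_k)$. Hence $\bar{\vv\phi}_k = \ind(c_k \geq 0)$ is a single scalar in $\{0,1\}$ that multiplies the whole step.

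If $c_k \geq 0$, then $\bar{\vv\phi}_k = 1$ and the cautious step coincides exactly with the base step, $\bbw_{k+1} = \bbw_k - \epsilon_k \bbu_k = \bw_k - \epsilon_k \bu_k = \bw_{k+1}$, so $\loss(\bbw_{k+1}) = \loss(\bw_{k+1})$ and the claim holds with equality. The only substantive case is $c_k < 0$, where $\bar{\vv\phi}_k = 0$ and the cautious optimizer stays put, $\bbw_{k+1} = \bbw_k = \bw_k$, giving $\loss(\bbw_{k+1}) = \loss(\bw_k)$. Here I would invoke the first-order convexity inequality at $\bw_k$ evaluated at $\bw_{k+1} = \bw_k - \epsilon_k \bu_k$, namely $\loss(\bw_{k+1}) \geq \loss(\bw_k) + \dd\loss(\bw_k)\tt(\bw_{k+1}-\bw_k) = \loss(\bw_k) - \epsilon_k c_k$. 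Since $c_k < 0$ and $\epsilon_k \geq 0$, the correction $-\epsilon_k c_k$ is nonnegative, so $\loss(\bw_{k+1}) \geq \loss(\bw_k) = \loss(\bbw_{k+1})$, which is exactly the desired bound.

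The argument carries no serious technical obstacle; it is a short case analysis. The one conceptual point worth emphasizing is \emph{why} the conclusion holds for every step size $\epsilon_k \geq 0$ with no upper bound, in contrast to the smoothness-based comparison of Theorem~\ref{thm:large_loss_decreasing}: convexity supplies a \emph{global} linear lower bound on $\loss$, so a base step taken along a direction that conflicts with the gradient can never push the loss below the stationary value that the cautious optimizer retains, regardless of how large $\epsilon_k$ is. I would also double-check the boundary convention of the mask ($\geq 0$ rather than $> 0$): placing the $c_k = 0$ case in the "move" branch is harmless, since there the first-order correction $-\epsilon_k c_k$ vanishes and equality is recovered anyway.
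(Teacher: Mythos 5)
Your proof is correct and follows essentially the same route as the paper's: a two-case split on the sign of $\bbu_k\tt\dd\loss(\bbw_k)$, with equality of the iterates when the mask is on and the first-order convexity inequality $\loss(\bw_k-\epsilon_k\bu_k)\geq\loss(\bw_k)-\epsilon_k\,\bbu_k\tt\dd\loss(\bw_k)$ when the mask is off. Your handling of the boundary case $\bbu_k\tt\dd\loss(\bbw_k)=0$ is in fact slightly cleaner than the paper's, whose proof branches at $>0$ versus $\leq 0$ while the mask in the statement uses $\geq 0$; as you note, that case yields equality either way, so nothing is at stake.
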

\begin{proof}
If $\bbu_k \tt \dd \loss(\bbw_k) > 0$, we have 
 $\bar{\vv\phi}_k = 1$, and $\bw_{k+1} = \bbw_{k+1}$, and $\loss(\bbw_{k+1}) = \loss(\bw_{k+1}).$ 

 If $\bbu_k \tt \dd \loss(\bbw_k) \leq 0$, we have $\bar{\vv \phi}_k = 0$ and   
 $\bbw_{k+1} = \bw_k$. By the convexity of $\loss$, we have 
\bb 
\loss(\bw_{k+1}) - \loss(\bbw_{k+1}) 
& = \loss(\bw_k - \epsilon \bu_k) - \loss(\bw_k)  \\
& \geq  - \epsilon \bu_k \tt \dd \loss(\bw_k) >0.
\ee
This proves the result. 
\end{proof} 


\begin{cor}
Consider the elementary test function: 
$$\loss(\bw) = \frac{1}{2}
\norm{\boldsymbol a \circ \bw}_2^2
.$$
where $\boldsymbol a\in \RR^d$ is a non-zero vector. 
This is commonly used as the testbed for optimization algorithms in theoretical analysis. 

Assume $\bu_k$ and $\bv_k$ are element-wise mappings. 
We have $\loss(\bbw_{k+1}) \leq \loss(\bw_{k+1})$
given $(\bw_{k},\bs_k) = (\bbw_{k},\bbs_k)$, 
with either the inner product mask $\bar{\vv\phi}_k = 
\ind(\bbu_k \tt \dd \loss(\bbw_k) \geq 0)$, 
or the element-wise mask $\bar{\vv\phi}_k = 
\ind(\bbu_k \circ \dd \loss (\bbw_k) \geq 0)$. 
\end{cor}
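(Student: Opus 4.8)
The plan is to reduce both assertions to the convex single-step comparison already proved in Theorem~\ref{thm:convexinnercase}. The two facts I would first record about the test loss $\loss(\bw)=\frac12\|\boldsymbol a\circ\bw\|_2^2=\frac12\sum_{i=1}^d a_i^2 w_i^2$ are that it is convex (a nonnegative sum of squares) and \emph{separable}, with $\loss(\bw)=\sum_i\loss_i(w_i)$, $\loss_i(w_i)=\frac12 a_i^2 w_i^2$, and coordinatewise gradient $\dd\loss(\bw)_i=a_i^2 w_i$ depending only on $w_i$.

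For the inner-product mask $\bar{\vv\phi}_k=\ind(\bbu_k\tt\dd\loss(\bbw_k)\ge0)$ there is nothing to prove beyond invoking convexity: Theorem~\ref{thm:convexinnercase} applies verbatim and delivers $\loss(\bbw_{k+1})\le\loss(\bw_{k+1})$ from the common initialization and for every step size. So the substance lies entirely in the element-wise mask.

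For the element-wise mask, the key step is to use separability to \emph{decouple} \eqref{equ:disc_0} and \eqref{equ:disc_1} into $d$ independent scalar dynamical systems. Since $\bu_k,\bv_k$ are element-wise maps, the $i$-th coordinate of the base update depends only on $(w_{k,i},s_{k,i})$, and since $\dd\loss(\bbw_k)_i=a_i^2\bbw_{k,i}$ is also coordinatewise, the mask entry $\bar{\vv\phi}_{k,i}=\ind(\bbu_{k,i}\,\dd\loss(\bbw_k)_i\ge0)$ depends only on coordinate $i$. I would verify this coordinate autonomy explicitly, so that each pair $(w_{\cdot,i},s_{\cdot,i})$ evolves on its own for both iterates. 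The second step is the observation that within one coordinate the element-wise mask coincides with the scalar inner-product mask---in $\RR^1$ an inner product is just a product---so the one-dimensional instance of the element-wise rule is exactly the hypothesis of Theorem~\ref{thm:convexinnercase} applied to the convex $\loss_i$. This yields $\loss_i(\bbw_{k+1,i})\le\loss_i(\bw_{k+1,i})$ for each $i$, and summing over $i$ with separability gives $\loss(\bbw_{k+1})\le\loss(\bw_{k+1})$.

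I expect the difficulty here to be bookkeeping rather than conceptual: the only real care needed is to justify rigorously that ``element-wise update map'' together with ``separable loss'' forces both trajectories to factor across coordinates, so that the scalar theorem can legitimately be applied coordinatewise and then recombined. A secondary detail is the behaviour at the boundary $\bbu_{k,i}\,\dd\loss(\bbw_k)_i=0$, but this is harmless, since there the base and cautious updates already agree in that coordinate; everything else is inherited from the convexity argument already packaged in Theorem~\ref{thm:convexinnercase}.
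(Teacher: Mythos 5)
Your proposal is correct and follows essentially the same route as the paper: the inner-product case is a direct invocation of Theorem~\ref{thm:convexinnercase}, and the element-wise case uses separability of the quadratic loss together with the element-wise structure of the updates to apply that theorem coordinate by coordinate and sum. Your write-up is merely a more explicit version of the paper's (quite terse) argument, including the harmless boundary observation that the mask equals one when the coordinatewise product vanishes.
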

\begin{proof}
The inner product case is implied by Theorem~\ref{thm:convexinnercase}. 
For element-wise mask, since the loss is an element-wise sum, and the update functions are element-wise mappings, which can apply Theorem~\ref{thm:convexinnercase} on each element. 

This argument, of course,
can be extended to general convex and separable loss functions of form $\mathcal{L}(\bw) = \sum_i \mathcal{L}_i(w_i).$
\end{proof}

\begin{thm}\label{thm:innerprod}
  Consider update \eqref{equ:disc_1}. 
   Assuming $\mathcal{L}(\cdot)$ is $\mu$-smooth, and consider the following mask function: 
   $$\bar{\vv\phi}_k = \alpha_k \ind(\dd \mathcal{L}(\bbw_k)^\top \bar{\vv u}_{k} \geq \frac{\alpha_k \mu\epsilon_k}{2}\norm{\bar{\vv u}_{k}}^2), 
   $$ 
   where $\alpha_k$ is a scaling factor.
   
   We have 
\bb
\loss(\bbw_{k+1}) \leq \loss ( \bbw_{k}). 
\ee 
which holds for any step size $\epsilon_k\geq 0$.
\end{thm}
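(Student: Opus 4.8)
The plan is to invoke the $\mu$-smoothness descent lemma, exactly mirroring the proof of the preceding element-wise theorem, while exploiting that the inner-product mask $\bar{\vv\phi}_k = \alpha_k \ind(\dd\loss(\bbw_k)^\top \bar{\vv u}_k \geq \frac{\alpha_k\mu\epsilon_k}{2}\norm{\bar{\vv u}_k}^2)$ is a \emph{scalar} (the indicator of an inner-product condition, rescaled by $\alpha_k$), rather than a coordinate-wise vector. Consequently the update from \eqref{equ:disc_1} reads $\bbw_{k+1} - \bbw_k = -\epsilon_k \bar{\vv\phi}_k\, \bar{\vv u}_k$, with $\bar{\vv\phi}_k$ acting by plain scalar multiplication.

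First I would apply $\mu$-smoothness to obtain
\[
\loss(\bbw_{k+1}) - \loss(\bbw_k) \leq \dd\loss(\bbw_k)^\top(\bbw_{k+1} - \bbw_k) + \tfrac{\mu}{2}\norm{\bbw_{k+1} - \bbw_k}^2,
\]
and then substitute the update to reach
\[
\loss(\bbw_{k+1}) - \loss(\bbw_k) \leq -\epsilon_k \bar{\vv\phi}_k\, \dd\loss(\bbw_k)^\top \bar{\vv u}_k + \tfrac{\mu}{2}\epsilon_k^2 \bar{\vv\phi}_k^2 \norm{\bar{\vv u}_k}^2.
\]

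The decisive step is to split on the binary indicator. If the indicator is zero, then $\bar{\vv\phi}_k = 0$, the iterate is unchanged, and $\loss(\bbw_{k+1}) = \loss(\bbw_k)$ trivially. If the indicator is one, then $\bar{\vv\phi}_k = \alpha_k$, and the threshold inside the indicator is exactly the statement $\dd\loss(\bbw_k)^\top \bar{\vv u}_k \geq \frac{\alpha_k\mu\epsilon_k}{2}\norm{\bar{\vv u}_k}^2$. Factoring $\epsilon_k\alpha_k$ out of the right-hand side rewrites the bound as $-\epsilon_k\alpha_k\big(\dd\loss(\bbw_k)^\top \bar{\vv u}_k - \frac{\alpha_k\mu\epsilon_k}{2}\norm{\bar{\vv u}_k}^2\big)$, whose bracketed factor is non-negative by the activation condition; since $\epsilon_k \geq 0$ and $\alpha_k > 0$, the whole expression is $\leq 0$, which is the claim.

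I do not expect a genuine obstacle: the argument is a one-line descent-lemma computation, needing neither multi-step nor convexity machinery. The only point worth stressing, and the reason the bound holds for \emph{every} $\epsilon_k \geq 0$, is that the threshold $\frac{\alpha_k\mu\epsilon_k}{2}\norm{\bar{\vv u}_k}^2$ inside the indicator is engineered so that the activation condition precisely offsets the quadratic smoothness penalty against the linear descent term. This parallels the preceding element-wise theorem; the sole difference is that here the gate is a scalar inner-product condition rather than a coordinate-wise one, and because it is $\{0,1\}$-valued we have $\bar{\vv\phi}_k^2 = \alpha_k\bar{\vv\phi}_k$, which keeps the quadratic term in lockstep with the linear term.
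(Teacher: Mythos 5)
Your proof is correct, and it is essentially the argument the paper intends: the paper actually leaves Theorem~\ref{thm:innerprod} without an explicit proof, but its proof of the preceding element-wise mask theorem is the same one-step descent-lemma computation, and your case split on the scalar indicator (with the threshold exactly cancelling the quadratic term when the gate is open, under the implicit assumption $\alpha_k>0$) is the right adaptation. Note that your version is even slightly cleaner than the paper's element-wise proof, since the threshold here uses $\alpha_k\mu\epsilon_k/2$ directly and so no auxiliary bound $\epsilon_k\alpha_k\leq\sigma$ is needed.
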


\newpage
\paragraph{Ablation on $\phi$:} To analyze the impact of cautious masking, we conduct ablation studies on different choices of $\phi$. Our results show that the performance is not sensitive to the specific choice of $\phi$, as all cautious variants incorporating these functions consistently outperform GDM. This underscores the robustness and effectiveness of cautious masking, as shown in Figure~\ref{fig:toy_d}.

The tested $\phi$ functions are defined as follows:
\bbb\label{equ:phi_ablation}
&\phi_c = \ind(x > 0) - c \ind(x \leq 0), \nnn \\
&\phi_{\mathtt{inner}} = \ind(x\tt  y \geq 0).
\eee
These results highlight that cautious masking enhances optimization stability and performance across diverse settings.

\begin{figure}[H]
    \centering
    \includegraphics[width=0.7\linewidth]{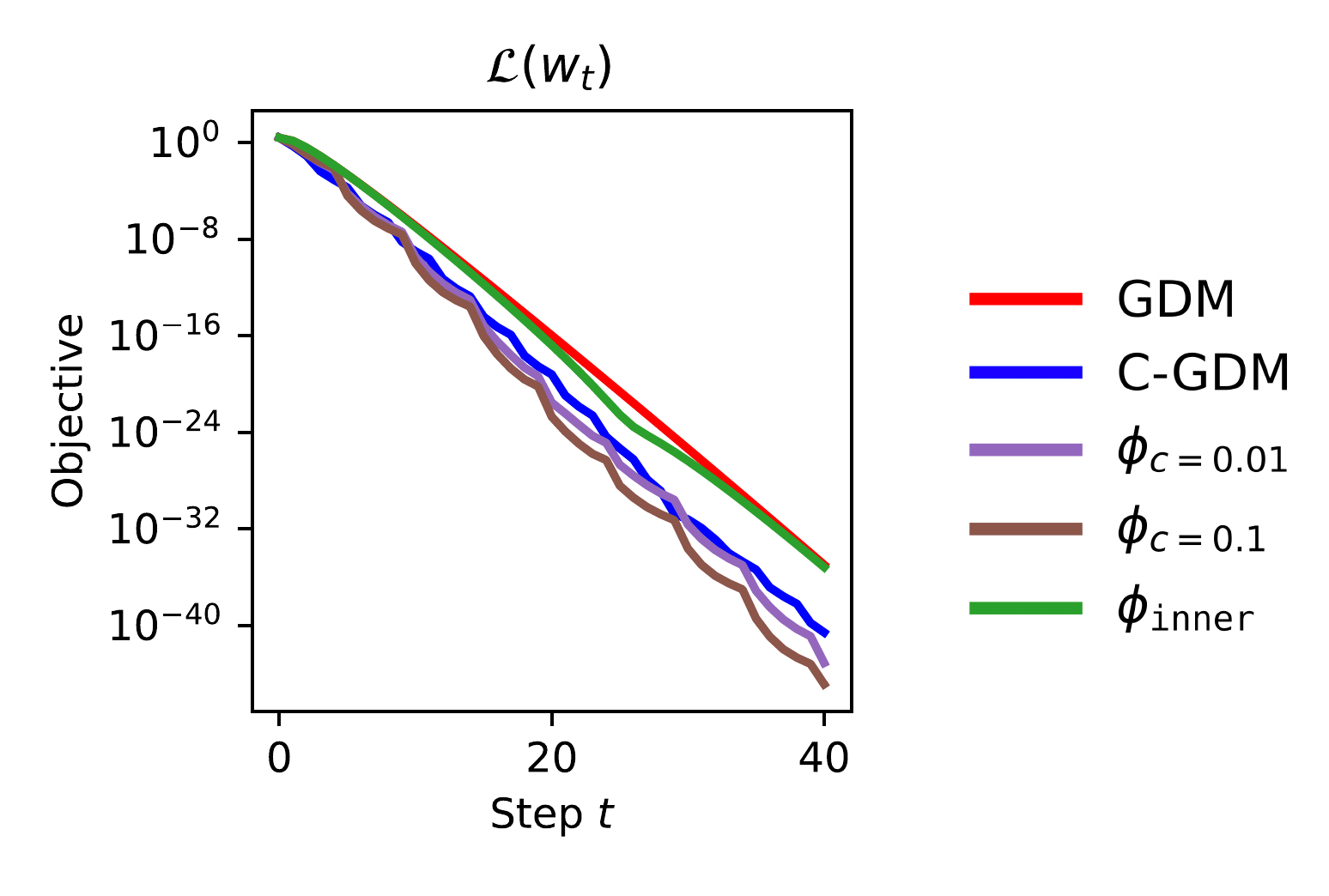}
    \caption{This plot presents an ablation study evaluating the performance of various \(\phi\) configurations. We compare the loss curves for different \(\phi\) choices. Across all tested configurations, the cautious GDM variants, \(\phi_c\) with \(c = 0.01, 0.1\), and \(\phi_{\mathtt{inner}}\) as defined in~\eqref{equ:phi_ablation}, consistently outperform standard GDM with optimal hyper-parameter configration~\citep{goh2017why}.
} \label{fig:toy_d}
\end{figure}

\newpage
\subsection{Pesudo Code}
See Algorithm~\ref{alg:c-lion_appendix} for the Cautious variant of the Lion optimizer. 
\begin{algorithm}
\caption{C-Lion Optimizer}\label{alg:c-lion_appendix}
\begin{algorithmic}[1] 
\REQUIRE learning rate $\epsilon$, momentum coefficient $\beta_1, \beta_2 \in [0, 1)$, weight decay factor $\gamma$ 
\STATE Initialize parameter vector $\bw_t$ 
\STATE Initialize $t = 0$, $\bmm_0 = \bm{0}$
\WHILE{$\bw_t$ not converged} 
    \STATE $t \gets t + 1$ 
    \STATE $g_t \gets \nabla_{\bw} \loss_t(\bw_{t-1})$ \hfill \COMMENT{Get gradients at timestep $t$} 
    \STATE $u_t \gets \text{sign}(\beta_1 m_{t-1} + (1 - \beta_1) \cdot g_t)$ \hfill \COMMENT{get the signed update}
    \STATE $m_t \gets \beta_2 m_{t-1} + (1 - \beta_2) \cdot g_t$
 \hfill \COMMENT{update momentum}
    \STATE \textcolor{magenta}{$ \boldsymbol\phi_t \gets \ind(\bu_t \circ \bg_t \geq 0)$} \hfill \bant{~Compute alignment mask}
    \STATE \textcolor{magenta}{$\bar \epsilon_t = \epsilon_t \frac{d}{\norm{\boldsymbol\phi_t}_0 + 1}$} \hfill \bant{~Scale lr, $d$ is dimension of $\boldsymbol\phi_t$}
    \STATE $\bw_t \gets \bw_{t-1} - \textcolor{magenta}{\bar \epsilon_t \boldsymbol\phi_t \circ}    \bu_t $ \bant{~Masked update}
    \STATE $\bw_t \gets \bw_t - \epsilon \gamma \bw_t$ \hfill \COMMENT{Weight decay} 
\ENDWHILE
\end{algorithmic}
\end{algorithm}

\section{Cautious Mask Study}
Figure~\ref{tab:ratio} shows the ratio of active dimensions 
$r_k = \mathtt{nnz}(\bar{\vv u}_k\circ \bar{\vv g}_k > 0)/\mathtt{dim}(\bar{\vv u}_k\circ \bar{\vv g}_k)$  
across the training iteration $k$ on LLaMA 100M. 
It see that it decreases from 1 and stabilizes around $0.55$. This suggests that the scaling factor generally lies within the range $[1, 1.55]$. The stable ratio does not change significantly across different models.

\begin{figure}[h!]
    \centering
    \begin{minipage}{0.28\textwidth}
        \centering
        \includegraphics[width=\textwidth]{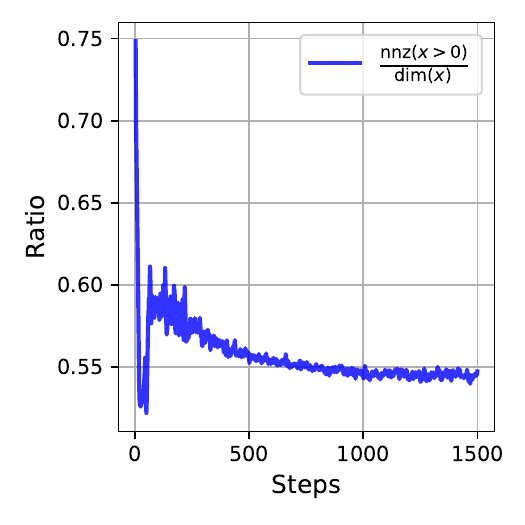}
    \end{minipage}%
    \begin{minipage}{0.2\textwidth}
        \caption{The sparsity ratio \( r(\mathbf{x}) = \frac{\mathtt{nnz}(\mathbf{x} > 0)}{\mathtt{dim}(\mathbf{x})} \) during pretraining of LLaMA 100M on the C4 dataset using the C-AdamW optimizer. The ratio quantifies the proportion of nonzero elements in the representations over training steps. 
        } 
        \label{tab:ratio}
    \end{minipage}
\end{figure}

\section{Additional Details on LLM Scaling}\label{sec::llm-scaling-details}
The training sequence length is set to be 4096 and we perform coordinate-descent over discrete grids for all optimizer hyperparameters (lr, weight decay, warmup, $\beta_1$, $\beta_2$, $\epsilon$, max-grad-norm, batch size). The following tables \ref{tab:grid-optima}, \ref{tab:grid-range} show the optimal set of parameters for each scale.

\begin{table*}[htbp!]
\centering
\small
\setlength{\tabcolsep}{5pt}
\begin{tabular}{lcccccccc}
\toprule
\multirow{2}{*}{\textbf{Hyperparameter}} & 
\multicolumn{4}{c}{\textbf{AdamW}} & 
\multicolumn{4}{c}{\textbf{C-AdamW}} \\
\cmidrule(lr){2-5} \cmidrule(lr){6-9}
 & 130M & 300M & 520M & 1.2B & 130M & 300M & 520M & 1.2B \\
\midrule
Learning rate (lr)   & 0.008 & 0.008 & 0.004 & 0.002 & 0.008 & 0.008 & 0.008 & 0.006 \\
Weight decay (wd)    & 0.1 & 0.1 & 0.2 & 0.2 & 0.1 & 0.1 & 0.1 & 0.1 \\
Warmup steps         & 2000 & 2000 & 1000 & 1000 & 2000 & 2000 & 2000 & 2000 \\
$\beta_1$            & 0.9 & 0.9 & 0.9 & 0.9 & 0.95 & 0.98 & 0.98 & 0.98 \\
$\beta_2$            & 0.98 & 0.98 & 0.98 & 0.98 & 0.98 & 0.98 & 0.98 & 0.98 \\
$\epsilon$           & 1e-20 & 1e-10 & 1e-10 & 1e-05 & 1e-15 & 1e-25 & 1e-25 & 1e-16 \\
Max-grad-norm        & 1 & 1 & 1 & 2 & 1 & 2 & 1 & 1 \\
Batch size           & 128 & 128 & 256 & 256 & 128 & 128 & 256 & 256 \\
\bottomrule
\end{tabular}
\caption{Optimal hyperparameters (from coordinate-descent over discrete grids) for AdamW and C-AdamW across model scales.}
\label{tab:grid-optima}
\end{table*}

\begin{table*}[htbp!]
\centering
\small
\setlength{\tabcolsep}{6pt}
\begin{tabular}{lc}
\toprule
\textbf{Hyperparameter} & \textbf{Range} \\
\midrule
Learning rate (lr)   & \{0.001, 0.002, 0.004, 0.006, 0.008, 0.010, 0.012, 0.014, 0.016, 0.018, 0.020\} \\
Weight decay (wd)    & \{0.05, 0.1, 0.15, 0.2, 0.25, 0.3, 0.35, 0.4\} \\
Warmup steps         & \{500, 1000, 2000, 4000, 6000\} \\
$\beta_1$            & \{0.85, 0.90, 0.92, 0.95, 0.98, 0.99\} \\
$\beta_2$            & \{0.96, 0.98, 0.99, 0.995, 0.9995\} \\
$\epsilon$           & \{1e{-30}, 1e{-25}, 1e{-20}, 1e{-15}, 1e{-10}, 1e{-5}\} \\
Max-grad-norm        & \{0.5, 1, 2, 4\} \\
Batch size           & \{32, 64, 128, 256, 512\} \\
\bottomrule
\end{tabular}
\caption{Discrete grid for hyperparameter search}
\label{tab:grid-range}
\end{table*}

\section{Additional Details on Mini-ImageNet}\label{sec::mini-imagenet-details}
Here you can find the hyperparameter search of the Mini-ImageNet experiments. The findings are consistent with our LLM experiments.
\begin{figure}[htbp!]
    \centering
    \includegraphics[width=\textwidth]{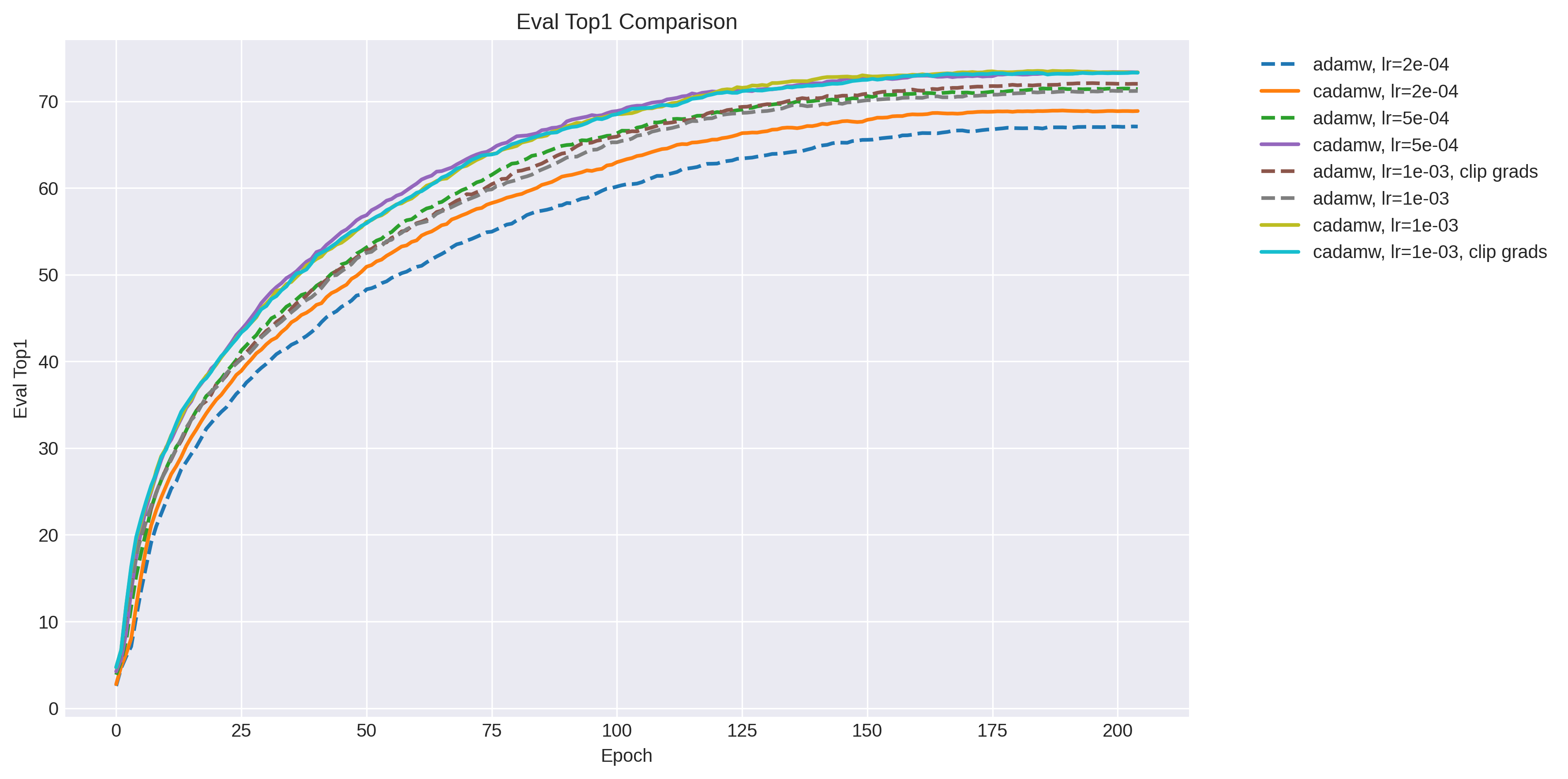}
    \caption{AdamW/C-AdamW learning rate search on Mini-ImageNet}
    \label{fig:mini-imagenet-adamw}
\end{figure}

\begin{figure}[htbp!]
    \centering
    \includegraphics[width=\textwidth]{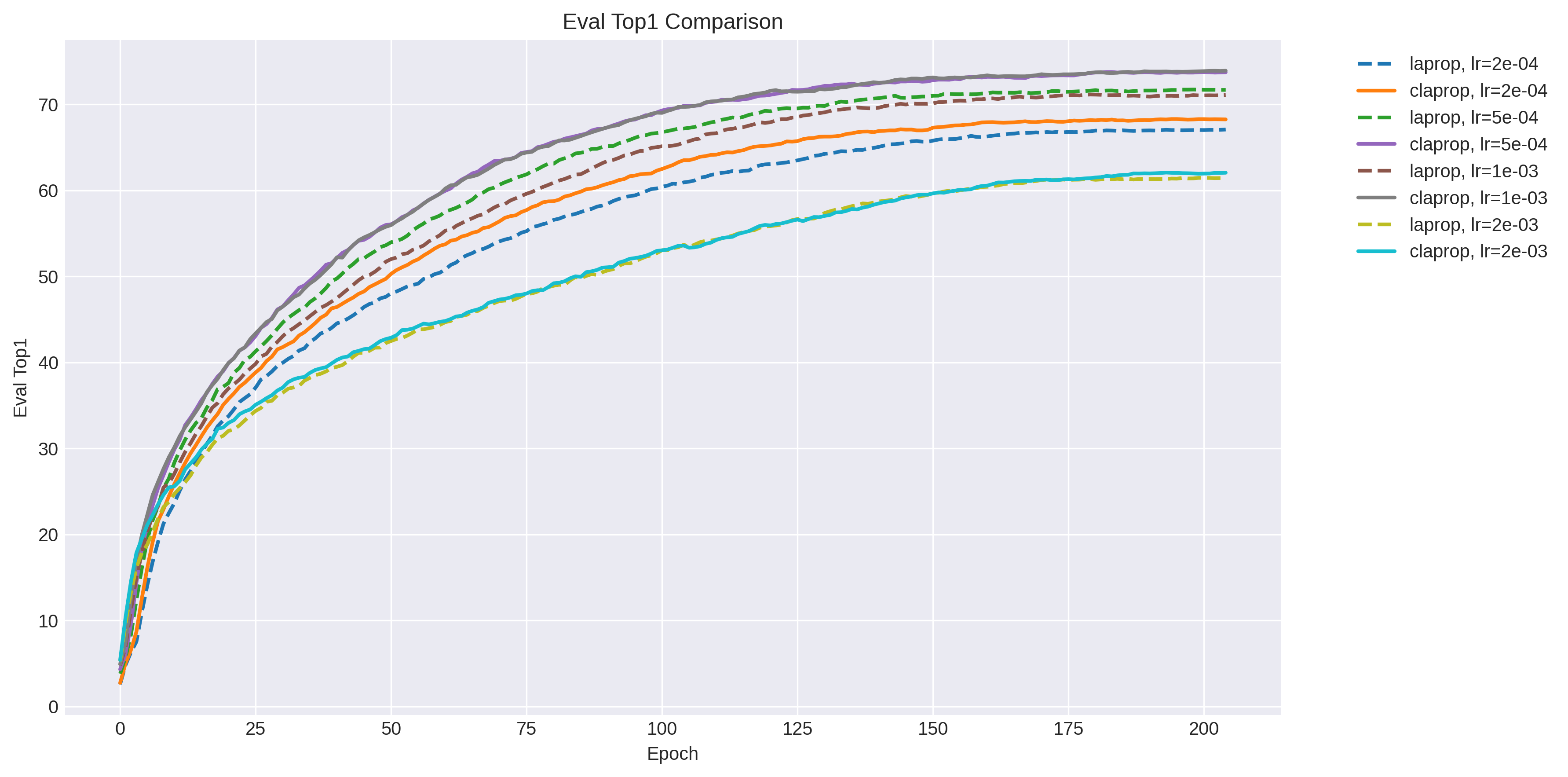}
    \caption{LaProp/C-LaProp learning rate search on Mini-ImageNet}
    \label{fig:mini-imagenet-laprop}
\end{figure}

\begin{figure}[htbp!]
    \centering
    \includegraphics[width=\textwidth]{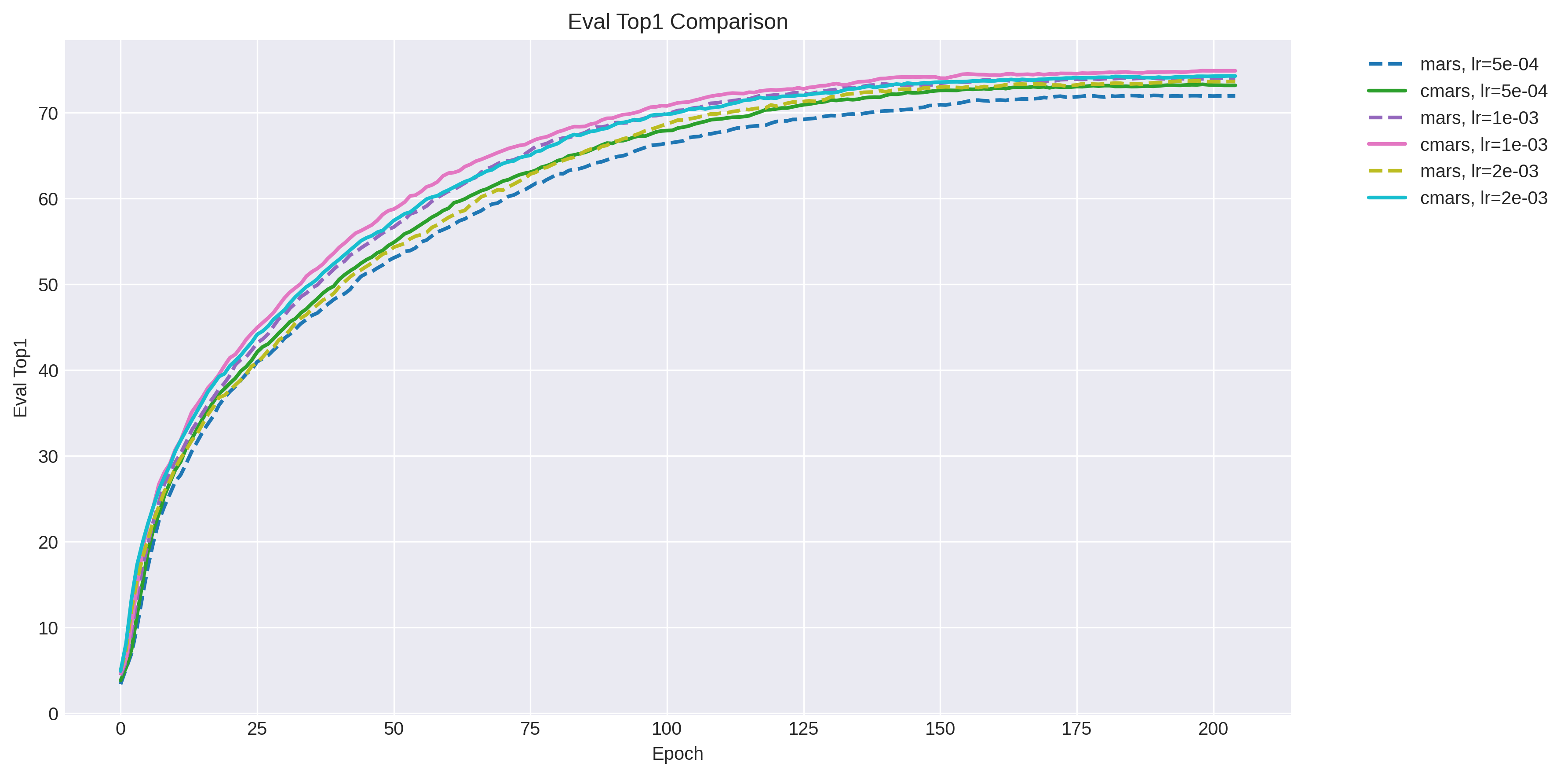}
    \caption{MARS/C-MARS learning rate search on Mini-ImageNet}
    \label{fig:mini-imagenet-mars}
\end{figure}

\section{Additional Experiments}
\subsection{Learning Rate Scheduler Ablation}
\begin{figure}[htbp!]
    \centering
    \includegraphics[width=\textwidth]{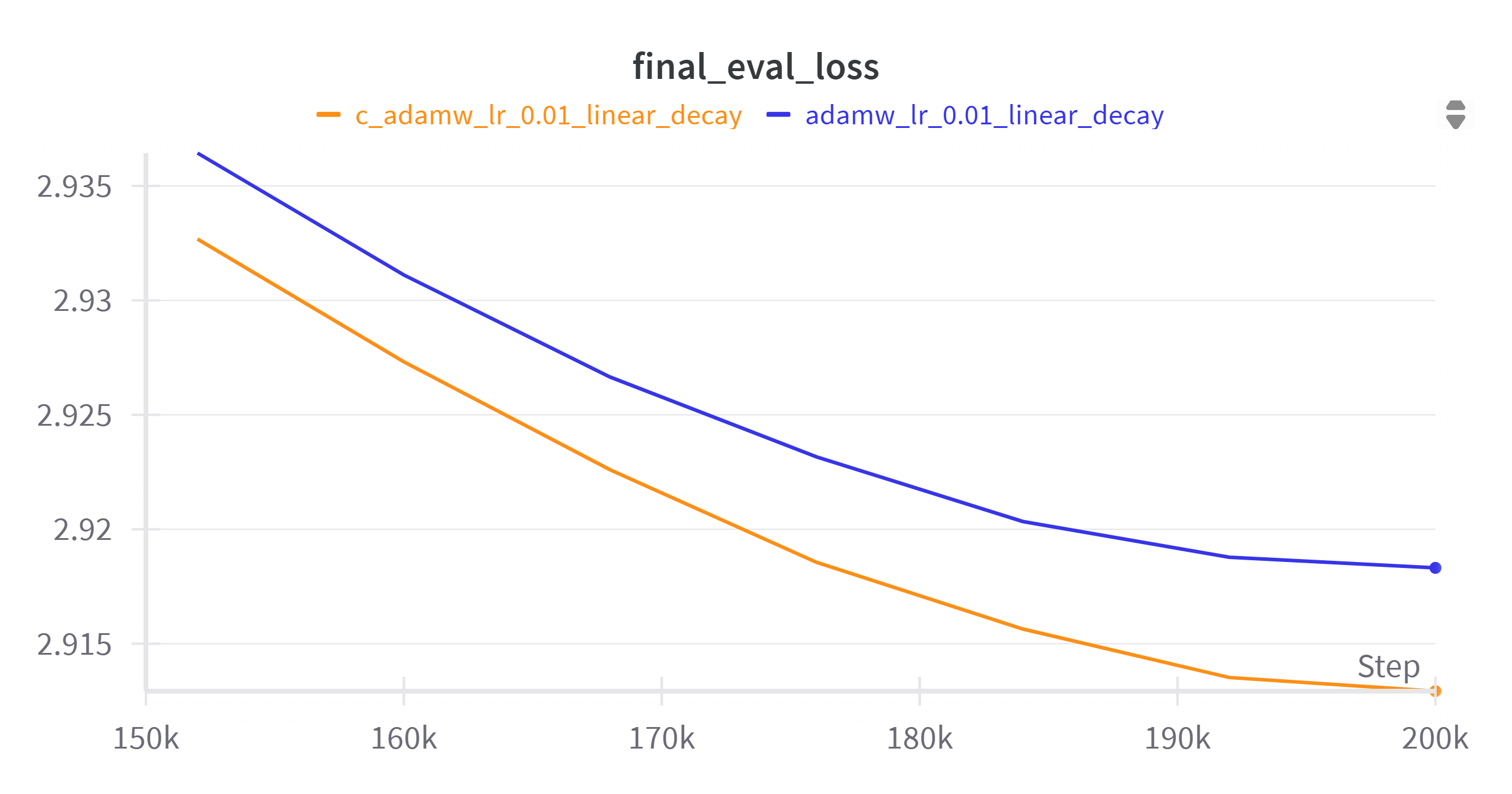}
    \caption{This experiment follows the same setup as the previous 100M Llama experiment in \ref{tab:c4_100m}}
    \label{fig:adamw}
\end{figure}

We notice a concurrent work \citep{bergsma2025straight} suggests that reducing learning rate straight to zero with linear decay is beneficial. To preliminarily validate the robustness of cautious optimizer, we took the 100M model setup in our LLM pretraining experiment and swap in the new scheduler. We found cautious optimizers stably outperform baselines.

\subsection{Batch Size Ablation on 60M}
We take the 60M parameter model and perform a sweep on Batch Sizes and train for 1.2 billion tokens ($1\times$ Chinchilla) on C4 and we are reporting perplexity (lower the better).

\begin{table}[h]
\centering
\begin{tabular}{lccc}
\toprule
\textbf{Batch Size (tokens)} & \textbf{24K} & \textbf{120K} & \textbf{600K} \\
\midrule
AdamW   & 38.5 & 37.2 & 41.7 \\
C-AdamW & 37.2 & 36.2 & 40.6 \\
\bottomrule
\end{tabular}
\caption{Perplexity of AdamW vs.\ C-AdamW across different batch sizes.}
\label{tab:ppl_batch}
\end{table}

\subsection{Muon \citep{jordan2024muon}}
We tested a preliminary version of C-Muon (Fig \ref{fig:muon}) and found that results encouraging as follows. Although at the time of submission we are not able to verify it at larger scale, it could potentially be a very interesting future direction.
\begin{figure}[htbp!]
    \centering
    \includegraphics[width=\textwidth]{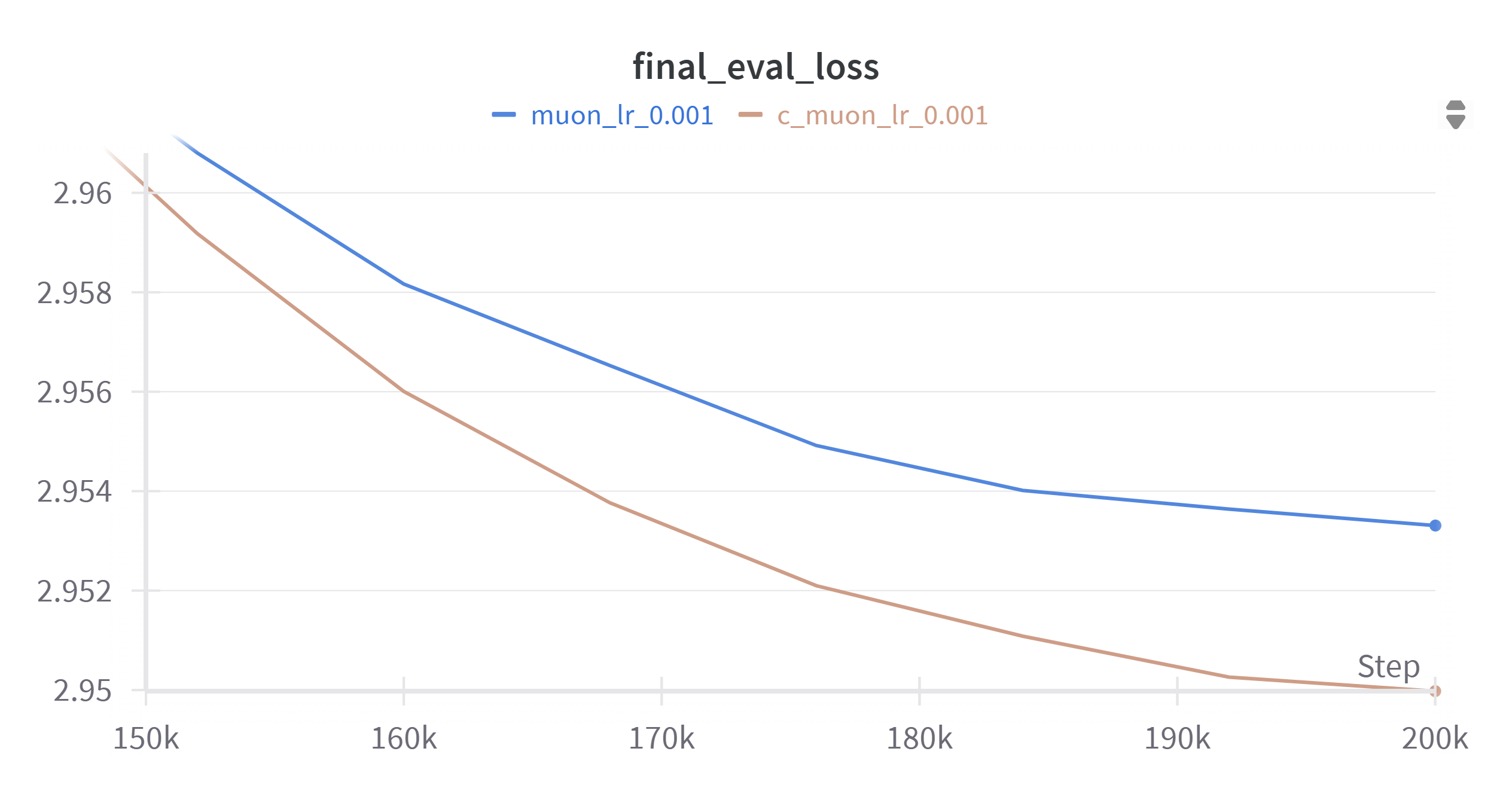}
    \caption{Following the same 100M model training setup in \ref{tab:c4_100m}, we also preliminarily test the latest popular optimizer Muon \cite{jordan2024muon} and find cautious improves upon the baseline.}
    \label{fig:muon}
\end{figure}

\subsection{Post-training LLM}
To further test cautious optimizers in language modeling tasks, we perform two post-training experiments.
First, we instruction-tune Qwen2.5-1.5B-Instruct \citep{yang2024qwen2} on the PowerInfer/QWQ-LONGCOT-500K \citep{poweriter},  a dataset focusing on enhancing the model's reasoning abilities by data distilled from QwQ-32B-Preview \citep{qwq-32b-preview}. Then we follow the experiment setting in \citep{huang2024n+} to perform RLHF alignment with PPO \citep{schulman2017proximal} on EleutherAI/pythia-1b-deduped. 

\textbf{Observation:} 
In both experiments, we find under same training steps and PPO episodes, cautious optimizers obtain lower training loss as well as higher rewards. This indicates potential of our proposed method beyond pretraining tasks.
\begin{figure*}[htbp!]
    \centering
    \begin{minipage}{0.4\textwidth}
        \centering 
        \includegraphics[width=\linewidth]{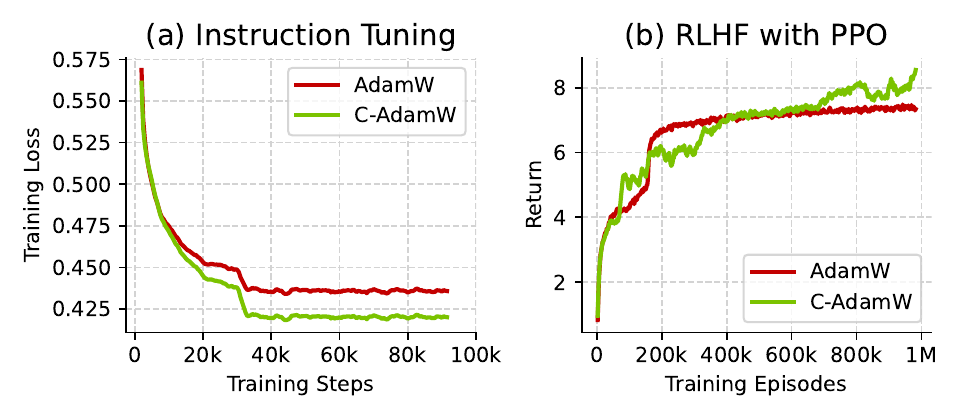}
        \vspace{-20pt}
    \end{minipage}
    \begin{minipage}{0.3\textwidth}
    \caption{
    (a) shows training loss on the task of instruction finetuing Qwen2.5-1.5B-instruct on the distilled dataset from QwQ for 500K question and answer pairs. (b) shows RLHF reward with PPO for 1 million episodes. In both cases, C-AdamW outperforms AdamW.}
    \vspace{-20pt}
    \end{minipage}
    \label{fig:post-training}
\end{figure*} 

\subsection{Pretraining Masked Autoencoders (MAEs)}
Masked Autoencoders (MAEs)~\citep{he2022masked} have emerged as a powerful approach for pretraining Vision Transformers (ViTs)~\citep{dosovitskiy2020image} on large-scale datasets like ImageNet-1K~\citep{russakovsky2015imagenet}. This task involves reconstructing 75\% of randomly masked image patches, a challenging objective that requires extensive training over hundreds of epochs and millions of images. The primary goal is to learn robust visual representations that are generalizable across downstream vision tasks. The quality of these representations is typically measured by the final evaluation loss, which reflects how accurately the model reconstructs masked test images; lower evaluation loss indicates higher-quality representations. The results of our experiments are summarized in Figure~\ref{tab:mae-eval-loss}, where we compare the performance of the cautious optimizer against the AdamW baseline.

\begin{figure*}[htbp!]
    \centering
    \begin{minipage}{0.26\textwidth}
        \centering
        \includegraphics[width=\textwidth]{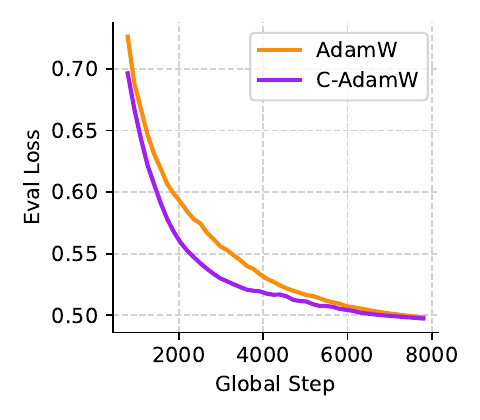}
        \vspace{-15pt}
    \end{minipage}%
    \begin{minipage}{0.22\textwidth}
        \caption{Evaluation loss of pretrained MAEs on ImageNet1K on ViT backbone for 50 epochs, using AdamW and C-AdamW. Hyperparameters can be found in Table \ref{tab:mae-exp-hyperparams}.
        } 
        \label{tab:mae-eval-loss}
        \vspace{-15pt}
    \end{minipage}
\end{figure*}
\begin{table}[htbp!]
\centering
\begin{tabular}{|l|c|c|c|c|c|}
\hline
\# Params & $\beta_1$ & $\beta_2$ & Learning rate& weight decay & Batch Size\\
\hline
 110 M & 0.9 & 0.999 & 1.5$\times 10^{-4}$& 0.05 & 4096\\
\hline
\end{tabular}
\vspace{5pt}
\caption{Hyperparameters for MAE experiment. This experiment follows the optimal setup provided in \cite{he2022masked}}
\label{tab:mae-exp-hyperparams}
\end{table}
\textbf{Observation:} From figure~\ref{tab:mae-eval-loss}, we observe that the cautious optimizer achieves lower evaluation loss faster compared to AdamW. This result highlights the effectiveness of the cautious approach in improving the precision of reconstruction and, consequently, the quality of the learned visual representations.

\section{Miscellaneous}
The masking and scaling would incur some additional costs. However, in practice we observe that the impact is minimum to overall throughput in the Distributed Data Parallel setting. For our 100M model runs on 16 GPUs, AdamW has token throughput of 579383 token/s, whereas C-AdamW has 551483 tokens/s, which is around 3\% difference in training throughput. Note that this is comparing our naive cautious implementation against the fused pytorch implementation.

As for other distributed training setting, such as tensor parallel, masking operation is element-wise hence not communication-bound, whereas scaling would require global statistics. However, the communication is also minimum, since only a single floating-point number (the local mean) needs to be all-gathered by other workers. Given modern GPU bandwidth, this should incur only minor overhead.

\begin{table}[hbtp!]
\centering
\caption{Training efficiency comparison between AdamW and C-AdamW.}
\begin{tabular}{lcc}
\toprule
\textbf{Method} & \textbf{Wall-Clock (h)} & \textbf{Throughput (token/s)} \\
\midrule
AdamW   & 10.547 & 571{,}839.76 \\
C-AdamW & 10.567 & 560{,}488.63 \\
\bottomrule
\end{tabular}
\end{table}

\end{document}